\newcommand{\set}[1]{\ensuremath{\mathcal{#1}}}
\newcommand{\relset}[0]{{\set{R}}}
\newcommand{\entset}[0]{{\set{E}}}
\newcommand{\score}[0]{{\phi_\theta}}
\newcommand{\scorecirc}[0]{{\phi^{\circ}_\theta}}
\newcommand{\realset}[0]{\ensuremath{\mathbb{R}}}
\newcommand{\vc}[1]{\ensuremath{\mathbf{#1}}}
\newcommand{\acr}[1]{{{\textsc{#1}}}}
\newcommand{\ourmodel}[0]{ReAlE}
\newcommand{\synthdata}{\mbox{\acr{REL-ER}}}
\newcommand{\maxar}[0]{\alpha}
\newcommand{\rv}[0]{\vc{r}}
\newcommand{\eg}[0]{\emph{e.g.},~}
\newcommand{\rank}[0]{\func{rank}}
\newcommand{\func}[1]{\ensuremath{\mathrm{#1}}}
\newcommand{\scoreH}[0]{{\phi_\theta^H}}
\newtheorem{theorem}{Theorem}
\newcommand{\op}[1]{{\fontfamily{lmss}\selectfont{#1}}}
\newenvironment{customlemma}[1]
  {\innercustomlemma}
  {\endinnercustomlemma}
\newenvironment{proof}{\noindent\textbf{Proof: }\ignorespaces}
  {\hspace*{\fill}$\Box$\medskip}
\icmltitlerunning{Knowledge Hypergraph Embedding Meets Relational Algebra}
\begin{document}

\twocolumn[
\icmltitle{Knowledge Hypergraph Embedding Meets Relational Algebra}



\icmlsetsymbol{equal}{*}

\begin{icmlauthorlist}
\icmlauthor{Bahare Fatemi}{ubc,equal}
\icmlauthor{Perouz Taslakian}{eai}
\icmlauthor{David Vazquez}{eai}
\icmlauthor{David Poole}{ubc}

\end{icmlauthorlist}

\icmlaffiliation{ubc}{University of British Columbia}
\icmlaffiliation{eai}{Element AI}

\icmlcorrespondingauthor{Bahare Fatemi}{bfatemi@cs.ubc.ca}

\icmlkeywords{Machine Learning, ICML}

\vskip 0.3in
]



\printAffiliationsAndNotice{\icmlEqualContribution} 

\begin{abstract}
Embedding-based methods for reasoning in knowledge hypergraphs learn a representation for each entity and relation. Current methods do not capture the procedural rules underlying the relations in the graph. We propose a simple embedding-based model called \ourmodel{} that performs link prediction in \emph{knowledge hypergraphs} (generalized knowledge graphs) and can represent high-level abstractions in terms of relational algebra operations. We show theoretically that \ourmodel{} is fully expressive and provide proofs and empirical evidence that it can represent a large subset of the primitive relational algebra operations, namely \op{renaming}, \op{projection}, \op{set union}, \op{selection}, and \op{set difference}. We also verify experimentally that \ourmodel{} outperforms state-of-the-art models in knowledge hypergraph completion, and in representing each of these primitive relational algebra operations. For the latter experiment, we generate a synthetic knowledge hypergraph, for which we design an algorithm based on the Erd\H{o}s-R\'enyi model for generating random graphs.
\end{abstract}


\section{Introduction}
\emph{Knowledge hypergraphs} are knowledge bases that store information about the world in the form of tuples describing relations among entities. They can be seen as a generalization of knowledge graphs in which relations are defined on two entities.
In recent times, the most dominant paradigm has been to learn and reason on knowledge graphs. However, hypergraphs are the original structure of many existing graph datasets. \citet{m-TransH} observe that in the original \acr{Freebase}~\cite{bollacker2008freebase} more than $1/3$rd of the entities participate in non-binary relations (i.e., defined on more than two entities). \citet{HypE} observe, in addition, that $61$\% of the relations in the original \acr{Freebase} are non-binary.
Similar to knowledge graphs, knowledge hypergraphs are incomplete because curating and storing all the true information in the world is difficult. The goal of \emph{link prediction} in knowledge hypergraphs (or \emph{knowledge hypergraph completion}) is to predict unknown links or relationships among entities based on existing ones. While it is possible to convert a knowledge hypergraph into a knowledge graph and apply existing methods on it, \citet{m-TransH} and \citet{HypE} show that embedding-based methods for knowledge graph completion do not work well out of the box for knowledge graphs obtained through such conversion techniques.

Most of the models developed to perform link prediction in knowledge hypergraphs are extensions of those used for link prediction in knowledge graphs. But a model designed to reason over binary relations does not necessarily generalize well to non-binary relations. Furthermore, it is not clear if the existing models for knowledge hypergraph completion effectively capture the relational semantics underlying the knowledge hypergraphs.
Recent research~\cite{battaglia2018relational,teru2020inductive} has highlighted the role of relational inductive biases in building learning agents that learn entity-independent relational semantics and reason in a compositional manner. Such agents can generalize better to unseen structures. In this work, we are interested in exploring the foundations of the knowledge hypergraph completion task; we thus aim to design a model for reasoning in knowledge hypergraphs that is simple, expressive, and can represent high-level abstractions in terms of relational algebra operations. 
We hypothesize that models that can reason about relations in terms of relational algebra operations have better generalization power.

\emph{Relational algebra} is a formalization that is at the core of relational models (e.g., relational databases). 
It consists of several \emph{primitive operations} that can be combined to synthesize all other operations used.
Each such operation takes relations as input and returns a relation as output.
In a knowledge hypergraph, these operations can describe how relations depend on each other.
To illustrate the connection between relational algebra operations and relations in knowledge hypergraphs, consider the example in Figure~\ref{fig:relational-algebra} that shows samples from some knowledge hypergraph.
The relations in the example feature the two primitive relational algebra operations \op{renaming} and \op{projection}. 
Relation $bought$ is a \op{renaming} of $sold$.
Relation $buyer$ is a \op{projection} of relation $sold$.
If a model is able to represent 
these two operations, it can potentially learn that 
a tuple $bought(X, Y, I)$ (person $X$ bought from person $Y$ item $I$) is implied by tuple $sold(Y, X, I)$ (person $Y$ sold to person $X$ item $I$);
or that 
a tuple $buyer(X, I)$ (person $X$ is the buyer of item $I$) is implied by the tuple $sold(Y, X, I)$. 
An embedding model that cannot represent the relational algebra 
operations \op{renaming} and \op{projection} would not be able to learn that relation $bought$ in Figure~\ref{fig:relational-algebra} is a \op{renaming} of relation $sold$. It would thus be difficult for such a model to reason about the relationship between these two relations.
In contrast, a model that can represent \op{renaming} and \op{projection} operations 
is potentially able to determine that $bought(alex, drew, book)$ is true because the train set contains $sold(drew, alex, book)$ and $bought$ is a \op{renaming} of $sold$. 

Designing reasoning methods that can capture the relational semantics in terms of relational algebra is especially important in the context of knowledge hypergraphs,
where many relations are defined on more than two entities.
Domains with beyond-binary relations often provide multiple methods of expressing the same underlying notion, as seen in the above example where relations $sold$ and $bought$ encode the same information. 
In contrast, since all relations in a knowledge graph are binary (have arity $2$), many of the relational algebra operations (such as \op{projection}, which changes the arity of the relation) are not applicable to this setting. 
This also highlights the importance of knowledge hypergraphs as a data model that encodes rich relational structures ripe for further exploration.
\begin{figure}[t]
    \begin{center}
    \includegraphics[width=0.45\textwidth]{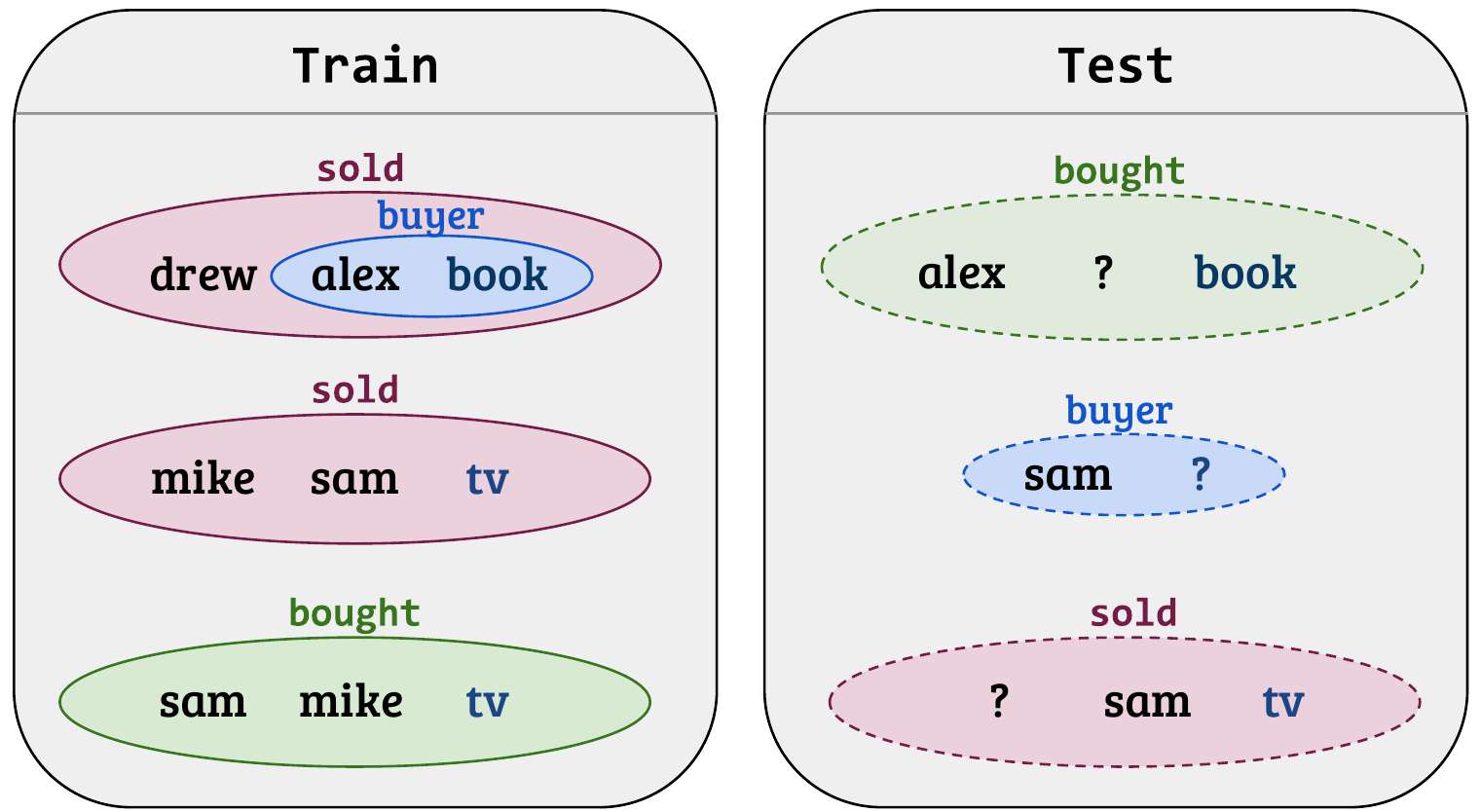}
  \end{center}
  \caption{An example of a knowledge hypergraph split into train and test sets. The train set contains tuples $sold(drew, alex, book)$, $buyer(alex, book)$, $sold(mike, sam, tv)$, and $bought(sam, mike, tv)$. 
  Relation $bought$ can be obtained by applying a \op{renaming} operation to relation $sold$. Similarly, relation $buyer$ is a \op{projection} of relation $sold$. 
  Learning these relational algebra operations can help the model generalize better to the tuples in the test set. The test responses, from top to bottom, are $drew$, $tv$, and $mike$.
  }
  \label{fig:relational-algebra}
\end{figure}
Our goal in this work is to design a knowledge hypergraph model that is simple, expressive, and can provably represent relational algebra operations. To achieve this, we propose a model called \ourmodel{}.
We show theoretically that \ourmodel{} can represent the operations \op{renaming}, \op{projection}, \op{selection}, \op{set union}, and \op{set difference}, and experimentally verify that it performs well in predicting such relations in existing knowledge bases.
We further prove that \ourmodel{} is fully expressive.
The contributions of this work are as follows.
\begin{enumerate}[itemsep=0cm,topsep=0.1cm] 
\item \emph{\ourmodel}, an embedding-based method for knowledge hypergraph completion that can provably represent the relational algebra operations \op{renaming}, \op{projection}, \op{set union}, \op{selection}, and \op{set difference},

\item \emph{experimental results} that show that \ourmodel{} outperforms the state-of-the-art on well-known public datasets, and

\item a \emph{framework} for generating synthetic knowledge hypergraphs, 
which is based on the Erd\H{o}s-R\'enyi random graph generation model and can generate relations by repeated application of primitive relational algebra operations. 
Such datasets can serve as benchmarks for evaluating the relational algebraic generalization power of knowledge hypergraph completion methods.
We use our algorithm to generate the dataset \synthdata{} (RELational Erd\H{o}s-R\'enyi) and use it in our experiments. 
\end{enumerate}

\section{Related Work}\label{sec:related-work}
Existing work in the area of knowledge hypergraph completion can be grouped into the following categories.

\textbf{Statistical Relational Learning. } 
Models under the umbrella of statistical relational learning~\cite{raedt2016statistical} can handle variable arity relations and explicitly model the interdependencies of relations. Our work is complementary to these approaches in that ReAlE is an embedding-based model that represents relational algebra operations implicitly, rather than representing them explicitly.

\textbf{Translational approaches. }
Models in this category are extensions of translational approaches for binary relations to beyond-binary. 
One of the earliest models in this category is m-TransH~\cite{m-TransH}, which extends TransH \cite{TransH} to knowledge hypergraph embedding. 
RAE~\cite{RAE} extends m-TransH by adding the \emph{relatedness} of values -- the likelihood that two values co-participate in a common instance -- to the loss function.
NaLP~\cite{NaLP} uses a similar strategy to RAE, but models the relatedness of values based on the roles they play in different tuples.
Models in this category have severe restrictions on the types of relations they can model. \citet{GETD} discuss these limitations, and we address them more formally in Section~\ref{sec:theoretical}.

\textbf{Tensor factorization based approaches. }
Models in this category extend tensor factorization models for knowledge graph completion to knowledge hypergraph completion.
\citet{GETD} propose GETD, which 
extends Tucker~\cite{balavzevic2019tucker} to n-ary relations.
GETD is fully expressive. However, given that for each relation it learns a tensor with a dimension equal to the arity of the relation, 
the memory and time complexity of GETD grow exponentially with the arity of relations in the dataset. 
\citet{HypE} propose HypE which is motivated by SimplE~\cite{kazemi2018simple}.
HypE disentangles the embeddings of relations from the positions of its arguments and thus the memory and time complexity grows linearly with the arity of the relations in the dataset. 
HypE is fully expressive but cannot represent some relational algebra operations. We discuss this in Section~\ref{sec:theoretical}.

\textbf{Key-value pair based approaches.}
Models in this category, such as NeuInfer and HINGE \cite{neuinfer,HINGE}, assume that a tuple is composed of a triple (binary relation), plus a list of attributes in the form of key-value pairs. The problem these works study is slightly different as we consider all information as part of a tuple. 
They evaluate their models on datasets for which they obtain the tuple attributes from external data sources using heuristics and thus
their results are not comparable to ours.

\textbf{Graph neural networks approaches.}
These approaches extend graph neural networks to hypergraph neural networks~\cite{HGNN,HGCN}. G-MPNN~\cite{yadati2020neural} further extends these models to knowledge hypergraphs (directed and labeled hyperedges). The G-MPNN
scoring function assumes relations are symmetric, and thus has restrictions in modeling the non-symmetric relations and is not fully expressive.

\textbf{Present work.} 
Reasoning in hypergraphs is a relatively underexplored area that has recently gained more attention. While the methods above show promise, none of them offer more understanding of the knowledge hypergraph completion task, as these works lack theoretical analysis of the models they propose and do not provide evidence to the type of entity-independent relational semantic they can model.
In this work, we design a model based on relational algebra which is the calculus of relational models. Besides the theoretical contributions, we show empirically how basing our model on relational algebra operations give us improvements compared to the existing works. 

\section{Definition and Notation}
Assume a finite set of entities~$\entset$ and a finite set of relations~$\relset$. Each relation has a fixed non-negative integral \emph{arity}. A \emph{tuple} is the form of $r(x_1, \dots, x_n)$ where $r \in \relset$, $n = |r|$ is the arity of $r$, and each $x_i \in \entset$.
Let $\tau$ be the set of ground truth tuples; it specifies all of the tuples that are true. If a tuple is not in~$\tau$, it is false.
A knowledge hypergraph consists of
a subset of the tuples $\tau' \subseteq \tau$. 
Knowledge hypergraph completion is the problem of predicting the missing tuples in $\tau'$, that is, finding the tuples $\tau \setminus \tau'$.
A knowledge graph is a special case of a knowledge hypergraph where all relations have arity $2$. 

An \emph{embedding} is a function from an entity or a relation to a vector or a matrix (or a higher-order tensor) over a field.
We use bold lower-case for embeddings, that is, $\vc{x}$ is the embedding of entity $x$, and $\rv$ is the embedding of relation $r$.
For the task of knowledge hypergraph completion, an embedding-based model having parameters~$\theta$ defines a function $\phi_\theta$ that takes a tuple as input and generates a prediction, \eg a probability (or a score) of the tuple being true.
A model is \emph{fully expressive} if given any assignment of truth values to all tuples,
there exists an assignment of values to~$\theta$ that accurately separates the true tuples from the false ones.

Following Python notation, $\vc{x}[k]$ is the $k$-th index of vector $\vc{x}$ and   $\vc{r}[i][k]$ is the $i$-th row and $k$-th column of matrix $r$. We use~$\times$ for multiplication of two scalars.
The bijective function $\pi:\{1,\dots,n\} \rightarrow \{1,\dots,n\}$ takes a sequence $(x_1, x_2,\dots, x_n)$ and outputs a sequence $(x_{\pi(1)},x_{\pi(2)},\dots,{x_{\pi(n)}})$. For example, if $\pi=\{1 \mapsto 2, 2 \mapsto 1, 3 \mapsto 3\}$ then,
$(x_{\pi(1)},x_{\pi(2)},x_{\pi(3)}) = (x_2, x_1, x_3)$.


\section{Proposed Method}\label{section:motivation}
\ourmodel{} (Relational Algebra Embedding) is a knowledge hypergraph completion model that has parameters $\theta$ and a scoring function $\score$. We motivate our model bottom-up by first describing an intuitive model for this task (Equation \ref{eq:v1}); 
we then discuss why this formulation would not work well and adjust it in \ourmodel{} (Equation~\ref{eq:main}).

Given a tuple  $r(x_1,\dots,x_n)$, determining whether it is true or false depends on the relation and the entities involved; it also depends
on the position of each entity in the tuple, as the role of an entity changes with its position and relation.
For example, the role of $drew$ is different in the tuples $sold(drew, alex, book)$ and $sold(alex, drew, book)$ as the position of $drew$ is different. The role of $drew$ is also different in the tuples $sold(drew,alex,book)$ and $bought(drew, alex, book)$ as the relation is different.

An intuitive model to decide whether a tuple is true or not is one that 
embeds each entity $x_i \in \entset$ into a vector $\vc{x_i} \in [0,1]^d$ of length~$d$, 
and the relation $r$ into a matrix  $\vc{r} \in \realset^{|r| \times d}$, where the~$i^{th}$ row in $\vc{r}$ operates over the entity at position $i$. 
Each relation $r$ has a bias term $b_r$ as a relation dependent constant that does not depend on any entities and allows the model to have the flexibility to search through the solution space (similar to a bias term in linear regression).
Such a model defines a score for a tuple. This score may be interpreted as ``the higher the score, the more likely it is that the tuple is true'', and can be expressed as follows:

\begin{align}\label{eq:v1}
    \scorecirc{}&(r(x_1, \dots, x_n)) 
    =  \sigma \left(b_{r} + \sum_{i=1}^{|r|} \sum_{k=0}^{d-1} \vc{x_{i}}[k] \times \vc{r}[i][k]\right)
\end{align}
\begin{figure}[t]

\begin{center}
    \includegraphics[width=1.\columnwidth, trim={0 0 0 0 cm},clip]{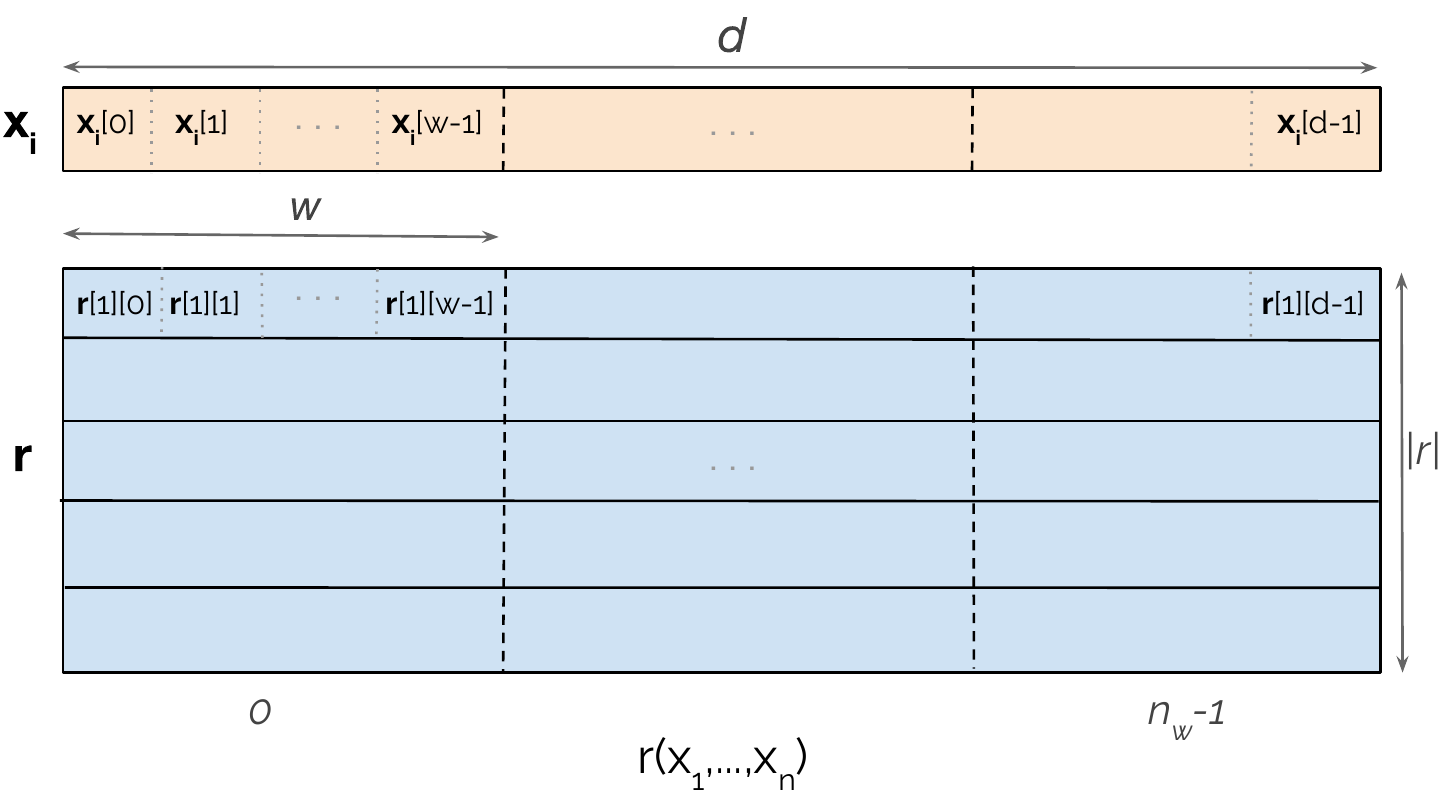}
  \end{center}
  \caption{A schematic of the entity and relation embeddings in ReAlE: the embedding dimension $d$ is divided into $n_w$ windows of size $w$.}
  \label{fig:embeddings}
\end{figure}

The above scoring function is similar to Canonical Polyadic~\cite{CP} for tensor decomposition, as the same indices in the embedding of entities and relations are multiplied and the final score is the result of summing these multiplications.
Studies~\cite{kazemi2018simple,lacroix2018canonical}, however, show that designs similar to that in Equation~\ref{eq:v1} do not properly model the interaction between embeddings of entities and relations at different indices, and that by adding interaction among different elements of the embeddings in different indices, the information flows better and the performance improves.
 
Therefore, one missing component in Equation~\ref{eq:v1} is the interaction of different elements of the embeddings. 
To create such an interaction, 
we introduce the concept of \emph{windows}, a range of indices whereby elements within the same window interact with each other.
\ourmodel{} uses windows to increase interaction between embeddings;
it then uses a nonlinear function $\sigma$ to obtain the contribution of each window, which are then added to produce a score. 
The number of embedding elements for each entity in a window is the \emph{window size} and is a hyperparameter
(see Figure~\ref{fig:embeddings}).
Let $w$ denote the window size, $n_w = \lfloor \frac{d}{w} \rfloor$  the number of windows, and $b_r^{j}$ the bias term of relation $r$ for the $j^{th}$ window, for all $j=0,\dots,n_w-1$.
Equation \ref{eq:main}
defines \ourmodel{}'s score of a tuple $r(x_1, x_2, \dots, x_n)$, where $\sigma$ is a 
monotonically-increasing nonlinear function
that is differentiable almost everywhere. 
\begin{equation}\label{eq:main}
\begin{split}
&\score(r(x_1, \dots, x_n)) = \\ 
&\frac{1}{n_w}\sum_{j=0}^{n_w-1} \sigma\left(b_r^{j} + \sum_{i=1}^{|r|} \sum_{k=0}^{w-1} \vc{x_i}[jw + k] \times \vc{r}[i][jw + k]\right)
\end{split}
\end{equation}

\paragraph{Learning \ourmodel{} Model.}
To learn a \ourmodel{} model, we use stochastic gradient descent with mini-batches. In each learning iteration, we take in a batch of positive tuples from the knowledge hypergraph. As a knowledge hypergraph typically only has positive instances and we need to also train our model on negative instances, we generate negative examples by following the contrastive approach of \citet{TransE}.

Given a knowledge hypergraph defined on $\tau'$, we let $\tau'_{\mathrm{train}}$, $\tau'_{\mathrm{test}}$, 
and~$\tau'_{\mathrm{valid}}$ denote the (pairwise disjoint) train, test, and validation sets, respectively,
so that $\tau' = \tau'_{\mathrm{train}} \sqcup \tau'_{\mathrm{test}} \sqcup \tau'_{\mathrm{valid}}$ where
 $\sqcup$ is disjoint set union.
To build a model that completes $\tau'$, we train it using $\tau'_{\mathrm{train}}$, tune the hyperparameters of the model using $\tau'_{\mathrm{valid}}$, and evaluate its efficacy on $\tau'_{\mathrm{test}}$.
For any tuple $t$ in $\tau'$, we let $T_{\mathrm{neg}}(t)$ be a function that generate a set of related negative samples. We define the following cross entropy loss that has been shown to be effective for link prediction~\citep{baselines-strike}.

\begin{equation}\label{eq:loss}
\mathcal{L}(\theta, \tau'_{\mathrm{batch}}) =  \sum_{t \in \tau'_{\mathrm{batch}}}{-\log\left(\frac{\mathrm{e}^{\score(t)}}{\displaystyle\sum_{t' \in ~\{t\} ~\cup ~ T_{\mathrm{neg}}(t)}{\mathrm{e}^{\score(t')}}}\right)}
\end{equation}

Here, $\theta$ represents parameters of the model including relation and entity embeddings, and $\score$ is the function given by Equation~\eqref{eq:main} that maps a tuple to a score using parameters~$\theta$, and $\tau'_{batch}$ is a batch of tuples in $\tau'_{train}$.
Algorithm~\ref{alg} shows a high-level description of how we train a \ourmodel{} model.

\begin{algorithm}[H]
  \caption{Learning \ourmodel{}}
  \label{alg}
  \SetNoFillComment
  \DontPrintSemicolon
  \KwIn{Tuples $\tau'_{train}$, loss function $\mathcal{L}$, scoring function $\score{}$}
  \KwOut{Embeddings $\vc{x}$ and \vc{r} for all entities and relations in $\tau'_{train}$.}
  Initialize \vc{x} and \vc{r} (at random) \; 
  
  \For {every batch $\tau'_{\mathrm{batch}}$ of tuples in $\tau'_{train}$}{
  
    \For{tuple $t$ in $\tau'_{\mathrm{batch}}$}{
      Generate negative tuples $T_{neg}(t)$ \;
      
      \For {$t' \in \{t\} \cup T_{neg}(t)$}{
        Compute $\score(t')$ ~~~~ (Eq.~\ref{eq:main})\;
      }
      
    } 
    
    Compute the loss $\mathcal{L}(\theta, \tau'_{\mathrm{batch}})$ ~~~~~~~~~~~~ (Eq.~\ref{eq:loss})\;
    
    Compute the gradient of loss with respect to \vc{x} and \vc{r} \;
    
    Update embeddings  \vc{x} and \vc{r} through back-propagation \;
    
  }
\end{algorithm}

\section{Theoretical Analysis}
\label{sec:theoretical}
To better understand the expressive power of \ourmodel{} and the types of reasoning it can perform, in this section we analyze the extent of its expressivity and its capacity to represent relational algebra operations without the operations being known or given to the learner. 

Relational algebra allows for quantification, in particular for statements that are true for all entities or are true for at least one. To allow for such statements, we introduce the notion of \emph{variables}, which, following the convention of Datalog, we write in upper case, e.g., $X_1,\dots,X_n$. We use lower case $x_1, \dots, x_n$ to denote
particular entities. To make a meaningful statement, each variable is quantified using $\forall$ (the statement is true for all assignments of entities to the variable) and $\exists$ (the statement is true if there exists an assignment of an entity to the variable).
$\bar{x}$ is a sequence of particular entities and $\bar{X}$ is a sequence of variables. 
The symbol $\neg$ is the negation operator.

For any $\bar{x}$ and any relation~$r$, we define the  \emph{relation complement} function $f$ as 
$f(\score(r({\bar{x}}))) = \score(\neg r({\bar{x}}))$. 
This function depends on the choice of the nonlinearity~$\sigma$ of the scoring function in Equation~\ref{eq:main}.
For example, if $\sigma$ is the Sigmoid function, then $f(\score(r({\bar{x}}))) = 1 - \score(r({\bar{x})})$; 
if it is the hyperbolic tangent (tanh), then $f(\score(r({\bar{x}}))) = - \score(r({\bar{x}}))$.
In what follows, we assume that $f$ exists for the selected $\sigma$.

We defer all proofs to Appendix~\ref{appendix:theoretical}.

\subsection{Full Expressivity}
\label{sec:expressive}
The two results in this section state that \ourmodel{} is fully expressive and that some other models are not. 

\begin{theorem}[Expressivity]
\label{expressive}
For any ground truth over entities \entset{} and relations \relset{} containing $\lambda$ true tuples with $\maxar = \max_{r \in \relset}(|r|)$ as the maximum arity over all relations in \relset{}, there is a \ourmodel{} model with $n_w = \lambda$, $w = \maxar$, $d = \max(\maxar \lambda, \maxar)$, and $\sigma(x) = \frac{1}{1 + \exp(-x)}$ that accurately separates the true tuples from the false ones.
\end{theorem}

\begin{theorem}\label{th:m-TransH}
m-TransH, RAE, and NaLP are not fully expressive and have restrictions on the relations they can represent.
\end{theorem}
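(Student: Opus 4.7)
The plan is to exhibit, for each of the three models, a small ground-truth instance that no parameter choice can separate, thereby establishing non-expressivity; the existence of such instances will simultaneously witness the structural ``restrictions on the relations they can represent.'' I would treat m-TransH as the primary case since RAE preserves its scoring function (augmenting only the training objective with a relatedness term), and NaLP, while reorganizing the positional representation around role--value pairs, exhibits an invariance of the same flavor.

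First, I would recall from \citet{m-TransH} that m-TransH scores a tuple $r(x_1,\dots,x_n)$ by projecting each entity embedding onto a relation-specific hyperplane and combining the projected vectors with position-specific weights $a_r^i$ before measuring the norm of the resulting sum (offset by a relation-specific translation). The key structural property I would isolate as a lemma is that this score is a function only of the weighted sum $\sum_{i=1}^{n} a_r^i P_r(\vc{x_i})$, so any two tuples whose entity assignments produce the same weighted sum must receive the same score, irrespective of the embedding dimension.

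Second, I would use this invariance to construct a concrete counterexample. Taking one ternary relation $r$ and a small entity set, I would enumerate the tuples obtainable by permuting a fixed multiset of entities and show that m-TransH's scores obey a nontrivial linear dependency forced by the invariance lemma: for instance, if any two of the position weights coincide, tuples related by swapping those positions are score-identical; more generally, a suitable linear combination of entity placements yields collinear score vectors. I would then declare a truth assignment that violates this dependency---e.g., one tuple false while all others sharing the same weighted sum are true---producing a ground truth no choice of $\theta$ can realize. The same instance, read as a relation definition, exhibits the structural restriction: m-TransH cannot represent relations whose truth set breaks the forced symmetry.

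Third, I would extend the argument. RAE shares the m-TransH scoring function, so the counterexample transfers verbatim; the additional relatedness term affects only optimization, not the expressible truth sets. For NaLP, I would establish an analogous invariance lemma---that its role-aware aggregator, being a symmetric (e.g.\ min/mean) function over role--value features, is invariant under permutations of role-value pairs that share a role embedding---and then construct a tuple pair that the scorer must map to the same value while ground truth requires distinct labels. The main obstacle I anticipate is making each invariance lemma tight enough to force a genuine impossibility (rather than a mere dimensional deficiency that sufficiently large embeddings could escape); I expect the cleanest route is to express the score in each model as a function of a low-rank algebraic summary of the tuple (a weighted projected sum for m-TransH/RAE, a multiset of role-value features for NaLP) and then invoke a pigeonhole argument on truth assignments versus achievable summaries. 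Concrete tuple listings and the arithmetic verifying each linear dependency would be relegated to the appendix.
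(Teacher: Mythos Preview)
Your approach differs substantially from the paper's. The paper proceeds by \emph{reduction to the binary case}: it observes that m-TransH restricted to arity-$2$ relations coincides with TransH, and then invokes the non-expressivity result for TransH from \citet{kazemi2018simple}; RAE inherits this verbatim since it shares m-TransH's scoring function. For NaLP, the paper algebraically collapses the scoring function---through the convolution, projection, and min layers---to the translational form $|\vc{P_1}\vc{x_1} + \cdots + \vc{P_n}\vc{x_n} + \vc{r}|_1$, which in the binary case again matches the family $|\vc{P_1}\vc{x_1} - \alpha\vc{P_2}\vc{x_2} + \vc{r}|_i$ shown non-expressive by \citet{kazemi2018simple} (here with $\alpha=-1$, $i=1$). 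No new counterexamples are built; the entire argument is recognizing that each model's binary restriction lands in a previously analyzed class.

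Your direct-construction strategy for m-TransH and RAE is plausible and, if completed, would yield a self-contained proof not relying on prior literature---that is what it buys. The cost is that the ``suitable linear combination'' step is delicate: with distinct position weights $a_r^i$ no naive permutation symmetry survives, so you would end up re-deriving something close to the reflexive/symmetric obstruction in \citet{kazemi2018simple} anyway. Your NaLP argument, however, has a real gap: the invariance you propose---permuting role--value pairs that share a role embedding---does not obviously bite, because NaLP learns a distinct role embedding per position and nothing in the model forces two roles to coincide. The paper's route sidesteps this entirely by showing that, whatever the role embeddings are, the composite architecture reduces to a translational norm scorer whose limitations are already known.
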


\subsection{Representing Relational Algebra with \ourmodel{}}\label{Operation-lemmas-section}

In this section, we describe the primitive relational algebra operations and prove how closely \ourmodel{} can represent each. 

\subsubsection{Renaming}
\op{Renaming} changes the name of one or more entities in a relation. A \op{renaming} operation can be written as the following logical rule, where $t$ is defined in terms of $s$.
\begin{equation}
\forall X_1,\dots, X_n  ~~~~~~~~
t({X_1,\dots,X_n}) \leftrightarrow s(X_{\pi(1)},\dots,X_{\pi(n)})
\end{equation}
For example,
$\forall~ X, Y, I ~~bought(X,Y,I) \leftrightarrow ~sold(Y,X,I)$
represents \op{renaming} relation (person X \emph{bought} item I from person Y) into relation (person Y \emph{sold} item I to person X).

\begin{theorem}[\op{Renaming}]
\label{lemma:renaming}
Given  permutation function $\pi$, and
relation~$s$, there exists a parametrization for relation~$t$ in \ourmodel{} such that
for  entities $x_1,\dots,x_n$, with arbitrary embeddings,
$$\score(t(x_1, \dots, x_n)) =  \score(s(x_{\pi(1)}, x_{\pi(2)},\dots,x_{\pi(n)}))$$
\end{theorem}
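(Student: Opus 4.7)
The plan is to construct the parametrization of $t$ directly from that of $s$ by permuting the rows of $s$'s relation matrix according to $\pi^{-1}$ and copying over the bias terms, and then to verify the desired equality by a straightforward change of summation index.

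Concretely, I would first recall the scoring function in Equation~\ref{eq:main} and observe that, for any relation $r$, the inner sum $\sum_{i=1}^{|r|} \vc{x_i}[jw+k] \times \vc{r}[i][jw+k]$ is the only place where the interaction between an argument position and the relation embedding appears; outside this sum, only the bias $b_r^{j}$ remains. The key structural fact I would lean on is that the sum over positions is just an ordinary finite sum, so reindexing by a bijection $\pi$ does not change its value.

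Next I would define the parameters of $t$: set $b_t^{j} := b_s^{j}$ for every window $j = 0,\dots, n_w-1$, and set the relation matrix by $\vc{t}[i] := \vc{s}[\pi^{-1}(i)]$ for each row $i \in \{1,\dots,n\}$ (equivalently $\vc{t}[\pi(i)] = \vc{s}[i]$). With this choice, for arbitrary fixed entity embeddings $\vc{x_1},\dots,\vc{x_n}$ and every window index $j$ and offset $k$, I would write
\begin{equation*}
\sum_{i=1}^{n} \vc{x_i}[jw+k] \times \vc{t}[i][jw+k]
= \sum_{i=1}^{n} \vc{x_i}[jw+k] \times \vc{s}[\pi^{-1}(i)][jw+k],
\end{equation*}
and then substitute $i' = \pi^{-1}(i)$, so that $i = \pi(i')$, to rewrite the right-hand side as $\sum_{i'=1}^{n} \vc{x_{\pi(i')}}[jw+k] \times \vc{s}[i'][jw+k]$. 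This is precisely the inner sum that appears when scoring $s(x_{\pi(1)},\dots,x_{\pi(n)})$ at window $j$, offset $k$.

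Finally, I would plug this identity back into Equation~\ref{eq:main}: since the bias terms were chosen equal, each $\sigma(\cdot)$ summand for $\score(t(x_1,\dots,x_n))$ matches the corresponding summand for $\score(s(x_{\pi(1)},\dots,x_{\pi(n)}))$, and averaging over windows preserves this equality. The only ``obstacle'' here is purely bookkeeping: keeping the distinction between $\pi$ and $\pi^{-1}$ straight, and noting that the construction works irrespective of the particular $\sigma$ chosen, because $\sigma$ is applied pointwise after the inner sums are matched. The embeddings of the entities play no role in the construction, which is exactly what makes the result hold for \emph{arbitrary} embeddings.
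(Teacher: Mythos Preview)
Your proposal is correct and matches the paper's proof essentially line for line: the paper also sets $\vc{t}[\pi(i)][k] = \vc{s}[i][k]$ (your ``equivalently $\vc{t}[\pi(i)] = \vc{s}[i]$'') and $b_t^j = b_s^j$, then reindexes the inner sum over positions via the bijection $\pi$. If anything, your write-up is slightly more explicit about the change of variable $i' = \pi^{-1}(i)$ than the paper's, but the argument is identical.
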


\subsubsection{Projection}
A \op{projection} operation that defines $t$ as a projection of $s$ can be written as the following (for $m<n$).
\begin{multline}
  \forall X_1,\dots ,X_m ~  
  t(X_1,\dots,X_m) ~ \\
  \leftrightarrow  \exists {X_{m+1},\dots ,X_n} ~~~~s(X_1,\dots,X_m,\dots,X_n)
\end{multline}
Note that \op{projection} can be paired with \op{renaming} to allow for arbitrary subsets and ordering of arguments. For example,
$$\forall X,I ~ seller(X,I) \leftrightarrow  \exists P ~bought(P,X,I)$$

\begin{theorem}[\op{Projection}]
\label{lemma:project}
For any relation $s$ on $n$ arguments there exists a parametrization for relation $t$ on $m<n$ arguments 
 in \ourmodel{} such that
for any arbitrary sequence $x_1,\dots, x_n$
$$\score(t(x_1,\dots, x_m)) \geq  \score(s(x_1,\dots, x_n))$$
\end{theorem}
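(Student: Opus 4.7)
The plan is to reduce the claim to a window-wise inequality at the level of pre-activations. Since the scoring function in Equation~\ref{eq:main} averages $\sigma(\cdot)$ over $n_w$ windows and $\sigma$ is monotonically increasing, it suffices to exhibit a parametrization of $t$ such that, for every window $j \in \{0, \ldots, n_w - 1\}$ and every choice of entities $x_1, \ldots, x_n$, the pre-activation inside $\sigma$ for $t(x_1, \ldots, x_m)$ is at least the pre-activation for $s(x_1, \ldots, x_n)$. Averaging over windows and using monotonicity will then deliver the required score inequality in a single line.

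First, I would copy the first $m$ rows of the relation matrix of $s$ into the relation matrix of $t$, i.e.\ set $\vc{t}[i][jw+k] = \vc{s}[i][jw+k]$ for every $i \in \{1, \ldots, m\}$, $j \in \{0, \ldots, n_w - 1\}$, and $k \in \{0, \ldots, w-1\}$. After this, the contributions of $x_1, \ldots, x_m$ to both pre-activations are identical and cancel, and the desired per-window inequality reduces to
$$b_t^{j} - b_s^{j} \;\geq\; \sum_{i=m+1}^{n} \sum_{k=0}^{w-1} \vc{x_i}[jw+k] \times \vc{s}[i][jw+k].$$
Next, I would exploit the fact that entity embeddings lie in $[0,1]^d$: for any real $\vc{s}[i][jw+k]$ and any $\vc{x_i}[jw+k] \in [0,1]$, the product is bounded above by $\max(\vc{s}[i][jw+k],\, 0)$. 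Consequently, choosing
$$b_t^{j} \;=\; b_s^{j} \;+\; \sum_{i=m+1}^{n} \sum_{k=0}^{w-1} \max\!\bigl(\vc{s}[i][jw+k],\,0\bigr)$$
makes the inequality hold uniformly over every choice of $x_{m+1}, \ldots, x_n$, and, because of the row copy, uniformly over $x_1,\dots,x_m$ as well.

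I do not anticipate any serious obstacle; the argument is essentially a bookkeeping exercise built on three ingredients: (i) the additive-over-windows form of the scoring function, (ii) the monotonicity of $\sigma$, and (iii) the boundedness of entity embeddings in $[0,1]^d$, which ensures the contribution of the extra arguments $x_{m+1}, \ldots, x_n$ can be absorbed into a finite per-window increment of $t$'s bias. The only place where one must be slightly careful is the sign handling inside the bias correction: taking the raw value of $\vc{s}[i][jw+k]$ would be wrong, because negative entries of $\vc{s}$ yield the \emph{smallest} product with $\vc{x_i}[jw+k] \in [0,1]$ rather than the largest, so one must clip at zero via $\max(\cdot,0)$ to capture the correct worst case.
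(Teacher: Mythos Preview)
Your proposal is correct and matches the paper's own proof essentially line for line: the paper also copies the first $m$ rows of $\vc{s}$ into $\vc{t}$ and sets $b_t^{j} = b_s^{j} + \sum_{i=m+1}^{n}\sum_{k=0}^{w-1}\max(0,\vc{s}[i][jw+k])$, relying (implicitly) on $\vc{x_i}\in[0,1]^d$ and the monotonicity of $\sigma$. If anything, your write-up is slightly more explicit about why the $\max(\cdot,0)$ clipping is the right upper bound on each product term.
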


Inequality is the best we can hope for, because multiple tuples with relation $s$ might project to the same tuple with relation $t$. 
The score of the tuple with relation $t$ should thus be greater than or equal to the maximum score for $s$. 

\subsubsection{Selection}
\op{Selection} returns the subset of tuples of a relation that satisfy a given condition.
Here, we consider equality conditions whereby a \op{selection} operation reduces the number of arguments, and has two forms defining $t$ as a selection of~$s$.
\begin{multline}
\forall X_1, \dots, X_n~~~~~~~
t(X_1,\dots ,X_{p-1},X_{p+1},\dots ,X_q,\dots ,X_{n})\\ \leftrightarrow \exists X_p ~~ s(X_1,\dots,X_n)  \land (X_{p} = X_{q})
\end{multline}
\begin{multline}
\forall X_1, \dots, X_n ~~~~~~~~~~~
t(X_1,\dots,X_{p-1},X_{p+1},\dots,X_{n}) \\ \leftrightarrow  \exists X_p ~~~s(X_1,\dots, X_n)  \land (X_{p} = c)  \textrm{~~~for fixed } c 
\end{multline}
%
%
For example,
\begin{multline*}
\forall X,Y~sold\_coffee(X,Y) \leftrightarrow \exists I ~sold(X,Y,I) \land (I=coffee)
\end{multline*}
in which $sold\_coffee(X,Y)$ is true if $X$ sold coffee to $Y$.

Observe that selecting tuples with the condition $X_p=X_q$ for arbitrary $p$ and $q$ is equivalent to first renaming the tuple so that $X_p$ is in position~$n$ and $X_q$ is in position $n-1$; and then performing a \op{selection} with the condition $X_{n-1}=X_n$ or $X_n=c$. 
Thus, to simplify our proofs, we show the \op{selection} operation for the case when $X_{n-1}=X_n$ and $X_n=c$. 

\begin{theorem}[\op{Selection 1}]
\label{lemma:sel1}
For arbitrary relation $s$, there exists a parametrization for relation $t$ in \ourmodel{} such that
for arbitrary entities $x_1,\dots,x_n$
\begin{equation*}
    \score(t(x_1,\dots,x_{n-1})) = \score(s(x_1,\dots,x_{n-1}, x_{n-1}))\\
\end{equation*}
\end{theorem}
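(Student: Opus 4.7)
The plan is to exhibit an explicit parametrization of $t$ in terms of $s$'s parameters, exploiting the linearity of the pre-$\sigma$ expression in the \ourmodel{} scoring function. No algebraic identity beyond the distributive law is actually needed here, because $t$ has arity $n-1$ while $s$ has arity $n$, and the two ``extra'' positions $n-1$ and $n$ of $s$ both get multiplied against the \emph{same} entity embedding $\vc{x_{n-1}}$ when we evaluate $s(x_1,\dots,x_{n-1},x_{n-1})$. That common factor can be absorbed into a single row of the relation matrix for $t$.

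Concretely, the first step is to expand the definition of $\score(s(x_1,\dots,x_{n-1},x_{n-1}))$ using Equation~\ref{eq:main}. For each window $j$, the inner sum splits as
\begin{equation*}
b_s^{j} + \sum_{i=1}^{n-2}\sum_{k=0}^{w-1} \vc{x_i}[jw+k]\times \vc{s}[i][jw+k] + \sum_{k=0}^{w-1} \vc{x_{n-1}}[jw+k]\times \bigl(\vc{s}[n-1][jw+k] + \vc{s}[n][jw+k]\bigr),
\end{equation*}
where the last term uses distributivity of scalar multiplication over addition on the coordinates $jw+k$.

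The second step is to define the parameters of $t$. Set $|t|=n-1$ and, for every window index $j$ and offset $k$, let $b_t^{j} := b_s^{j}$, let $\vc{t}[i][jw+k] := \vc{s}[i][jw+k]$ for all $i \le n-2$, and let $\vc{t}[n-1][jw+k] := \vc{s}[n-1][jw+k] + \vc{s}[n][jw+k]$. Substituting these into $\score(t(x_1,\dots,x_{n-1}))$ from Equation~\ref{eq:main} yields exactly the expression above inside each $\sigma$, and hence equality of the window-averaged sums.

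The only thing to check carefully is that this construction is legitimate within \ourmodel{}: the shape of $\vc{t}$ ($(n-1)\times d$) matches what the model demands for a relation of arity $n-1$, and the bias vector $(b_t^0,\dots,b_t^{n_w-1})$ is of the right length. Since entity embeddings are untouched and we only prescribe relation parameters, the statement ``for arbitrary entities $x_1,\dots,x_{n-1}$'' is immediate. There is no genuine obstacle here; the lemma essentially says that equality-of-two-arguments selection is free because \ourmodel{}'s pre-activation score is linear in each relation row, and I would present it as a short computation after the expansion above.
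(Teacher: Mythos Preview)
Your proposal is correct and follows essentially the same approach as the paper: expand the score of $s(x_1,\dots,x_{n-1},x_{n-1})$, collect the two terms involving $\vc{x_{n-1}}$ via distributivity, and then set $\vc{t}[i][k]=\vc{s}[i][k]$ for $i\le n-2$, $\vc{t}[n-1][k]=\vc{s}[n-1][k]+\vc{s}[n][k]$, and $b_t^j=b_s^j$. The only addition is your explicit remark that the resulting $\vc{t}$ has the correct shape $(n-1)\times d$, which the paper leaves implicit.
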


\begin{theorem}[\op{Selection 2}]
\label{lemma:sel2}
For arbitrary relation $s$ and for a fixed constant $c$, there exists a parametrization for relation $t$ in \ourmodel{} such that
for arbitrary entities $x_1, \dots, x_n$
\begin{equation*}
    \score(t(x_1,\dots,x_{n-1})) = \score(s(x_1,\dots,x_{n-1}, c))
    \end{equation*}
\end{theorem}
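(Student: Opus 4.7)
The plan is to exploit the fact that fixing the last argument of $s$ to a constant $c$ makes the contribution of position $n$ to the score a quantity that depends only on $\vc{c}$ and the last row of $\vc{s}$, and is therefore the same for every choice of $x_1,\dots,x_{n-1}$. Since this constant contribution is, window by window, a single scalar, it can be absorbed directly into the per-window bias of the new relation $t$. No coupling between rows of $\vc{t}$ is required, so the construction should be strictly simpler than the one for \op{Selection 1}.

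First I would expand $\score(s(x_1,\dots,x_{n-1},c))$ using Equation~\ref{eq:main} and split the inner sum over $i$ into two parts: the part with $i=1,\dots,n-1$ (which varies with the free entities) and the single term $i=n$ (which depends only on $\vc{c}$ and $\vc{s}[n]$). Defining, for each window $j=0,\dots,n_w-1$, the constant $\alpha_j = \sum_{k=0}^{w-1}\vc{c}[jw+k]\times\vc{s}[n][jw+k]$, the argument of $\sigma$ in window $j$ becomes $(b_s^j + \alpha_j) + \sum_{i=1}^{n-1}\sum_{k=0}^{w-1}\vc{x_i}[jw+k]\times\vc{s}[i][jw+k]$, which is exactly of the form produced by an arity-$(n-1)$ relation.

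Next I would specify the parameters of $t$: set $\vc{t}[i][jw+k] = \vc{s}[i][jw+k]$ for every $i=1,\dots,n-1$ and every $j,k$, and set the bias $b_t^j = b_s^j + \alpha_j$ for each window $j$. Substituting these into Equation~\ref{eq:main} for $\score(t(x_1,\dots,x_{n-1}))$ and comparing window by window with the rewritten expression for $\score(s(x_1,\dots,x_{n-1},c))$ yields equality of every $\sigma$-argument, hence equality of the scores, for every choice of $x_1,\dots,x_{n-1}$.

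There is essentially no obstacle here: the only thing to check is that the per-window bias $b_t^j$ has exactly the one degree of freedom needed to absorb the constant $\alpha_j$, and that this construction requires no property of $\sigma$ beyond what is already assumed in Section~\ref{section:motivation}. The fact that the last argument is a fixed entity, rather than an equality between two variables as in Theorem~\ref{lemma:sel1}, is precisely what makes the bias absorption sufficient on its own, so nothing further is required.
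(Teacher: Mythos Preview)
Your proposal is correct and is essentially identical to the paper's own proof: you expand the score, isolate the window-$j$ contribution of the fixed entity $c$ as the scalar $\alpha_j=\sum_{k=0}^{w-1}\vc{c}[jw+k]\times\vc{s}[n][jw+k]$, copy the first $n-1$ rows of $\vc{s}$ into $\vc{t}$, and absorb $\alpha_j$ into the bias via $b_t^j=b_s^j+\alpha_j$. The paper does exactly this, with the same parameter assignments.
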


\subsubsection{Set Union}
\op{Set union} operates on relations of the same arity, and returns a new relation containing the tuples
that appear in at least one of the relations. A \op{set union} operation can be written as the following logical rule with relation $t$ as the union of $s$ and $r$.
\begin{equation}
    \forall \bar{X} ~~~~~~~~ t(\bar{X}) \leftrightarrow s(\bar{X}) \lor r(\bar{X})
\end{equation}
For example,
\begin{multline*}
  \forall X_1, X_2,I~~~ traded(X_1, X_2,I) \\ \leftrightarrow ~sold(X_1,X_2,I) \lor bought(X_1,X_2,I) 
%
\end{multline*}
For a \ourmodel{} model to be able to represent the \op{set union} operation, first observe 
that any score for a tuple $t$ that represents the union of relations $r$ and $s$ 
depends on how dependent the two relations~$r$ and $s$ are. 
For example, if $s$ is a subset of $r$, then the score of $t$ is equal to that of $r$.
But since we do not know about such dependence relations in the data, then the best
we can hope for is a bound that shows that the score of~$t$ is \emph{at least} as high as the maximum 
score of either $r$ or $s$, as the following lemma states.

\begin{theorem}[\op{Set Union}]
\label{lemma:union}
For arbitrary relations $s$ and $r$ with the same arity, there exists a parametrization for relation $t$ in \ourmodel{} such that 
for arbitrary entity set $\bar{x}$
$$\score(t(\bar{x})) \geq \max(\score(s(\bar{x})), \score(r(\bar{x})))$$ 
\end{theorem}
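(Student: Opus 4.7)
The plan is to construct a parametrization of $t$ whose window-wise pre-activation pointwise dominates the window-wise pre-activations of both $r$ and $s$, and then propagate this through $\sigma$ and the outer average. For each window index $j \in \{0,\dots,n_w-1\}$ and each tuple $\bar{x}=(x_1,\dots,x_n)$, I denote by
\[
\alpha_j(\bar{x}) = b_r^{j} + \sum_{i=1}^{n}\sum_{k=0}^{w-1} \vc{x_i}[jw+k]\times\vc{r}[i][jw+k]
\]
the pre-$\sigma$ value of window $j$ of $r$ at $\bar{x}$, and define $\beta_j(\bar{x})$ analogously using $\vc{s}$ and $b_s^j$. I propose the parametrization $\vc{t}[i][jw+k] := \vc{r}[i][jw+k] + \vc{s}[i][jw+k]$ together with $b_t^j := b_r^j + b_s^j + C$, where $C$ is a constant fixed below. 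By linearity of Eq.~\ref{eq:main} in the relation embedding and bias, the window-$j$ pre-activation of $t$ evaluates to $\gamma_j^{t}(\bar{x}) = \alpha_j(\bar{x}) + \beta_j(\bar{x}) + C$.

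Next I pick $C$ to force $\gamma_j^t \ge \max(\alpha_j,\beta_j)$ simultaneously for every $j$ and every $\bar{x}$, which is equivalent to $C \ge -\min(\alpha_j,\beta_j)$. Since $\entset$ is finite and the arity is fixed, only finitely many tuples $\bar{x}$ arise, so
\[
C := \max\Bigl(0,\; -\min_{j,\bar{x}}\min\bigl(\alpha_j(\bar{x}),\beta_j(\bar{x})\bigr)\Bigr)
\]
is a finite real number that meets the requirement. Monotonicity of $\sigma$ then gives the window-wise bound $\sigma(\gamma_j^t(\bar{x})) \ge \max\bigl(\sigma(\alpha_j(\bar{x})),\sigma(\beta_j(\bar{x}))\bigr)$ for every $j$ and $\bar{x}$.

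Averaging over $j$ and using the elementary fact that the mean of per-index maxima dominates the maximum of per-index means, I conclude
\[
\score(t(\bar{x})) = \frac{1}{n_w}\sum_{j} \sigma(\gamma_j^t) \;\ge\; \frac{1}{n_w}\sum_{j} \max\bigl(\sigma(\alpha_j),\sigma(\beta_j)\bigr) \;\ge\; \max\bigl(\score(r(\bar{x})),\score(s(\bar{x}))\bigr),
\]
which is the claim. The only real obstacle is conceptual rather than technical: one must accept that equality cannot be targeted, since without knowledge of the overlap between $r$ and $s$ the ``true'' union score could legitimately lie anywhere between the pointwise maximum and $\score(r)+\score(s)$. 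The additive-plus-shift construction is just the cleanest way to make each window of $t$ ``see'' both $r$ and $s$ while absorbing any negative cross-interaction into the bias $C$; a cruder fallback would be to set $\vc{t}=\mathbf{0}$ and $b_t^j$ to a single sufficiently large constant, which satisfies the bound by saturating $\sigma$.
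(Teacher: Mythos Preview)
Your argument is correct, but it takes a genuinely different route from the paper's. The paper exploits the model-specific assumption that entity embeddings lie in $[0,1]^d$: with nonnegative $\vc{x_i}[k]$, setting $\vc{t}[i][k]=\max(\vc{s}[i][k],\vc{r}[i][k])$ and $b_t^j=\max(b_s^j,b_r^j)$ makes each summand of the pre-activation dominate the corresponding summand for both $s$ and $r$, so the window pre-activation of $t$ dominates term-by-term. Your construction instead adds the two relation embeddings and absorbs any negative cross-term into a global shift $C$, relying on finiteness of $\entset$ rather than on nonnegativity of the entity vectors. The trade-off is this: the paper's parametrization of $t$ depends only on the parameters of $r$ and $s$ (entity-embedding-free, coordinate-local), whereas your $C$ requires scanning all tuples $\bar{x}$ and hence implicitly depends on every entity embedding; on the other hand, your argument would go through even without the $[0,1]^d$ constraint on entities, which the paper's coordinatewise-max trick genuinely needs. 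Both constructions satisfy the theorem as stated, and your ``cruder fallback'' of zeroing $\vc{t}$ and saturating the bias also works for the same finiteness reason.
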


\subsubsection{Set Difference}
\op{Set difference} operates on relations of the same arity, and
returns a new relation containing the tuples from the left relation that do not appear in the right one.
The \op{set difference} operation can be written as the following logical rule, where relation $t$ is set difference of $s$ and $r$.
\begin{equation}
    \forall \bar{X}  ~~~~~~~~ t(\bar{X}) \leftarrow s(\bar{X}) \land \neg r(\bar{X})
\end{equation}
For example, relation
$needs\_filter(X,Y)$ is true if $X$ bought coffee but did not buy coffee filter from $Y$.
\begin{multline*}
 \forall X,Y ~~needs\_filter(X,Y) \leftarrow bought\_coffee(X,Y) \land \\ 
 \neg bought\_filter(X,Y)
\end{multline*}

Similar to \op{set union}, the score of a \op{set difference} operator depends on how dependent the relations
$r$ and $s$ are. For the same reasons, the best we can hope for in this case is to show
that the score of $t$ is smaller than that of both $s$ and $\neg r$
(since $t(\bar{X})$ is true only when both $s(\bar{X})$ and $\neg r(\bar{X})$ are true, then the scores of the latter two must be higher).
In the lemma that follows, $f$ is the relation complement function described in the introduction of Section~\ref{sec:theoretical}.

\begin{theorem}[Set Difference]
\label{le:diff}
For arbitrary relations $r$ and $s$ with the same arity, if $f$ is a linear relation complement function and $f(\sigma(x)) = \sigma(c * x)$ with $c$ as a constant, 
there exists a parametrization for relation $t$ in \ourmodel{} such that
for arbitrary entities $x_1, \dots, x_n$ 
$$\score(t(\bar{x})) \leq \min(\score(s(\bar{x})), f(\score(r(\bar{x}))))$$ 
\end{theorem}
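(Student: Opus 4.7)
The plan is to exploit the additive window structure of Equation~\ref{eq:main}, together with the linearity of $f$, to bound each window's contribution for $t$ separately against the corresponding contributions of $s$ and of the complemented $r$. Throughout, I write $\ell_r^j(\bar{x}) := \sum_{i=1}^{|r|}\sum_{k=0}^{w-1} \vc{x_i}[jw+k]\times \vc{r}[i][jw+k]$ for the inner sum in window~$j$ of $\score(r(\bar{x}))$, and analogously $\ell_s^j$ and $\ell_t^j$.

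First I would rewrite $f(\score(r(\bar{x})))$ by pushing the affine $f$ through the average $\tfrac{1}{n_w}\sum_j$ and applying the hypothesis $f(\sigma(x)) = \sigma(cx)$ inside each window. This yields
\[
f(\score(r(\bar{x}))) \;=\; \frac{1}{n_w}\sum_{j=0}^{n_w-1}\sigma\bigl(c(b_r^{j} + \ell_r^j(\bar{x}))\bigr),
\]
putting $\score(s(\bar{x}))$ and $f(\score(r(\bar{x})))$ into the same window-by-window form and reducing the target bound to a per-window inequality followed by averaging.

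Next I would parametrize $t$ by combining $s$ and $r$: set $\vc{t}[i][k] := \vc{s}[i][k] + c\,\vc{r}[i][k]$, so that $\ell_t^j = \ell_s^j + c\,\ell_r^j$, and then choose each bias $b_t^j$ sufficiently negative that
\[
b_t^j + \ell_t^j(\bar{x}) \;\leq\; \min\bigl(b_s^j + \ell_s^j(\bar{x}),\ c(b_r^j + \ell_r^j(\bar{x}))\bigr)
\]
holds for every entity tuple $\bar{x}$. The main obstacle is that both right-hand sides depend on $\bar{x}$, yet $b_t^j$ must be chosen once; this is resolved by observing that entity embeddings lie in $[0,1]^d$ while $\vc{s}$ and $\vc{r}$ are fixed, so $\ell_s^j$ and $\ell_r^j$ are bounded uniformly in $\bar{x}$, and hence any $b_t^j$ below a finite threshold works.

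Finally, monotonicity of $\sigma$ preserves the per-window inequality after applying $\sigma$. Averaging over $j$ then gives $\score(t(\bar{x})) \leq \score(s(\bar{x}))$ from the first entry of the min and $\score(t(\bar{x})) \leq f(\score(r(\bar{x})))$ from the second entry via the rewrite of step one. Combining the two bounds yields the claimed inequality.
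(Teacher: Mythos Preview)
Your argument is correct and follows the same high-level skeleton as the paper: rewrite $f(\score(r(\bar{x})))$ window-by-window using affinity of $f$ and the hypothesis $f(\sigma(x))=\sigma(cx)$, establish a per-window inequality on the pre-$\sigma$ arguments, apply monotonicity of $\sigma$, and average. The difference is in the parametrization of $t$. The paper sets $\vc{t}[i][k]=\min(\vc{s}[i][k],\,c\,\vc{r}[i][k])$ and $b_t^j=\min(b_s^j,\,c\,b_r^j)$; non-negativity of the entity embeddings (they lie in $[0,1]^d$) then gives $b_t^j+\ell_t^j(\bar{x})\leq b_s^j+\ell_s^j(\bar{x})$ and $b_t^j+\ell_t^j(\bar{x})\leq c(b_r^j+\ell_r^j(\bar{x}))$ directly, with no further choice needed. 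Your construction instead fixes $\vc{t}=\vc{s}+c\,\vc{r}$ and pushes all the work onto a sufficiently negative bias, using boundedness of $[0,1]^d$ to guarantee such a bias exists. This is valid for the theorem as stated, but note that your particular choice of $\vc{t}[i][k]$ plays no real role: the same argument would go through with $\vc{t}=0$, since the bias alone can drive each window's pre-$\sigma$ argument arbitrarily low. The paper's entrywise-$\min$ construction is tighter and more informative (the resulting $\score(t(\bar{x}))$ actually tracks $s$ and $\neg r$), whereas yours, while a legitimate witness, can make $\score(t(\bar{x}))$ uniformly close to $\sigma(-\infty)$.
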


\begin{table*}
\footnotesize
\setlength{\tabcolsep}{2pt}
\caption{Knowledge hypergraph completion results on \acr{JF17K}, \acr{FB-auto} and \acr{m-FB15K} for baselines and the proposed method. Our method \ourmodel{} outperforms the baselines on all datasets. The 
sign 
``-'' indicates that the corresponding paper has not provided the results.
GETD is trained on a significantly smaller embedding dimension compared to the baselines to fit into the memory. Refer to Appendix~\ref{appndix:implementation} for implementation details of baselines and the proposed model.}
\label{table:FB-subsets}
\begin{center}
\setlength{\tabcolsep}{3pt}
\begin{tabular}{l|cccc|cccc|cccc}
\multicolumn{1}{c}{} & \multicolumn{4}{c}{\acr{JF17K}}
& \multicolumn{4}{c}{\acr{FB-auto}} & \multicolumn{4}{c}{\acr{m-FB15K}}\\
\cmidrule(lr){2-5} \cmidrule(lr){6-9} \cmidrule(lr){10-13}
Model    & MRR  & Hit@1 & Hit@3 & Hit@10 & MRR & Hit@1 & Hit@3 & Hit@10 & MRR & Hit@1 & Hit@3 & Hit@10 \\\hline
m-DistMult~\cite{HypE} & 0.463 & 0.372 & 0.510 & 0.634 & 0.784 & 0.745 & 0.815 & 0.845 & 0.705 & 0.633 & 0.740 & 0.844 \\
m-CP~\cite{HypE} &  0.392 & 0.303 & 0.441 & 0.560& 0.752& 0.704 & 0.785 & 0.837 & 0.680 & 0.605 & 0.715 & 0.828 \\
m-TransH~\cite{m-TransH} & 0.444 & 0.370 & 0.475 & 0.581 &  0.728 & 0.727 & 0.728 & 0.728& 0.623 & 0.531 & 0.669 & 0.809\\
RAE~\cite{RAE} & 0.310 & 0.219 & 0.334 &  0.504 & - & - & - & - & - & - & - & -\\
NaLP~\cite{NaLP} & 0.366 & 0.290 & 0.391 &  0.516 & - & - & - & - & - & - & - & -\\
GETD~\cite{GETD} & 0.151 & 0.104 & 0.151 & 0.258 & 0.367 & 0.254 & 0.422 & 0.601 & - & - & - & -\\
HSimplE~\cite{HypE} & 0.472 & 0.378 & 0.520 & 0.645 & 0.798 & 0.766 & 0.821 & 0.855 & 0.730 & 0.664 & 0.763 & 0.859\\
HypE~\cite{HypE} & 0.494 & 0.408 & 0.538 & 0.656 & 0.804 &
 0.774 &  0.823 & 0.856 & 0.777 & 0.725 & 0.800 & 0.881  \\
G-MPNN~\cite{yadati2020neural} & 0.501 & 0.425 & 0.537 & 0.660 & - & - & - & - & 0.779 & 0.732 & 0.805 & 0.894 \\
\hline
\ourmodel{}  (Ours) & \textbf{0.530} & \textbf{0.454} & \textbf{0.563} & \textbf{0.677} & \textbf{0.861} & \textbf{0.836} & \textbf{0.877} & \textbf{0.908} & \textbf{0.801} & \textbf{0.755} & \textbf{0.823} & \textbf{0.901}\\
\end{tabular}
\end{center}
\end{table*}


\setlength{\intextsep}{0pt}
\subsection{Representing Relational Algebra with HypE}\label{HypE-cannot-represent}
So far, we have established the theoretical properties of the proposed model and we showed that m-TransH, RAE, and NaLP (and G-MPNN in Section~\ref{sec:related-work}) are not fully expressive. 
In Appendix~\ref{appendix:relational-algebra-hype}, we further prove that HypE cannot represent all relational algebra operations. In particular, we show that HypE cannot represent \op{selection}.

\section{Experimental Setup}
In this section, we explain the datasets that are used in the experiments. We further explain the evaluation metrics used to compare models. We defer the implementation details of our model and baselines to Appendix~\ref{appndix:implementation}.
\subsection{Datasets}
To evaluate \ourmodel{}
for knowledge hypergraph completion, we conduct experiments on two classes of datasets.
 
\textbf{Real-world datasets.}  We use three real-world datasets for our experiments: 
\acr{JF17K} is proposed by \citet{m-TransH}, and \acr{FB-AUTO} and \acr{M-FB15K} are proposed by \citet{HypE}. 
Refer to Appendix~\ref{appendix:datasets} for statistics of the datasets.
 
\textbf{Synthetic dataset.} 
To study and evaluate the generalization power of models in a controlled environment, we also generate a synthetic dataset to use in our experiments. This practice has become common in recent years, with the creation of
several procedurally generated benchmarks to study the generalization power of models in different tasks. 
Examples of such datasets include CLEVR~\cite{johnson2017clevr} for images, TextWorld~\cite{cote2018textworld} for text data, and GraphLog~\cite{sinha2020evaluating} for graph data.
To create a benchmark for analyzing the relational algebraic generalization power of \ourmodel{} for hypergraph completion, we consider the following criteria. 

\begin{enumerate}[itemsep=0pt,topsep=0pt]
    \item \emph{Completeness}: The benchmarks must contain the desired relational algebra operations; in our case: \op{renaming}, \op{projection}, \op{selection}, \op{set union}, and \op{set difference}.
    \item \emph{Diversity}: The benchmark must contain a variety of relations that are the result of \emph{repeated application} of relational algebra operations having varying depth. 
    \item \emph{Compositional generalization}: The benchmark must contain relations that are the result of repeated application of operations of different types.
\end{enumerate}

To synthesize a dataset that satisfies the above conditions, we extend the Erd\H{o}s-R\'enyi model~\cite{erdos1959} for generating random graphs to directed edge-labeled hypergraphs. 
We use this hypergraph generation model to first generate a given number of true tuples; we then apply 
the five relational algebra operations to these tuples (repeatedly and recursively) to obtain new tuples with varying depth.
A detailed description of our  extension to the Erd\H{o}s-R\'enyi model and the full algorithm for generating
the synthetic dataset can be found in Appendix~\ref{appendix:datasets}.

\begin{table*}
\footnotesize
\caption{Breakdown performance of MRR across composite relations (based on a sequence of primitive operations) and of varying depth on the \synthdata{} dataset along with their statistics. 
}
\begin{center}
\setlength{\tabcolsep}{3pt}
\begin{tabular}{l|c|ccccc|cccc}

\multicolumn{2}{c}{}
& \multicolumn{5}{c}{Operation type}
& \multicolumn{4}{c}{Depth}\\
\cmidrule(lr){3-7}
\cmidrule(lr){8-11}
Model & All & elementary& \op{renaming} & \op{project} & \op{set union} & \op{set difference} & 1 & 2 & 3 & 4\\\hline
m-TransH~\cite{m-TransH} & 0.387 & 0.166 & 0.447 & 0.652 & 0.459 & 0.464 & 0.531 & 0.499 & 0.352 & 0.03\\
HypE~\cite{HypE} & 0.689& 0.335 & 0.877 & 0.856 & 0.888 & 0.894 & 0.881 & 0.872 & 0.897 & \textbf{0.976}\\
\hline
\ourmodel{} (Ours) & \textbf{0.709} & \textbf{0.336} & \textbf{0.882} & \textbf{0.877} & \textbf{0.932} & \textbf{0.923} & \textbf{0.887} & \textbf{0.950} & \textbf{0.945} & 0.938  \\\hline\hline
\#tuples & 5378 & 1833 & 458 & 762 & 1676 & 649 & 2075 & 1204 & 245 & 21 \\
\end{tabular}
\end{center}
\label{table:JF17KS}
\end{table*}
\subsection{Evaluation Metrics}
We evaluate the link prediction performance with two standard metrics Mean Reciprocal Rank (MRR) and Hit@k, $k \in \{1, 3, 10\}$.
Both MRR and Hit@k rely on the \emph{ranking} of a tuple $x \in \tau'_{\mathrm{test}}$ within a set of \emph{corrupted} tuples.

For each tuple  $r(x_1, \dots, x_n)$ in $\tau'_{\mathrm{test}}$ and each entity position~$i$ in the tuple,
we generate $|\entset|-1$ corrupted tuples by replacing 
the entity $x_i$ with each of the entities in $\entset \setminus \{x_i\}$.
For example, by corrupting entity $x_i$, we obtain a new tuple $r(x_1, \dots, x_i^c, \dots, x_n)$ where $x_i^c \in \entset \setminus \{x_i\}$.
Let the set of corrupted tuples, plus $r(x_1, \dots, x_n)$, be denoted by $\zeta_i(r(x_1,\dots,x_n))$.
Let $\rank_i(r(x_1, \dots, x_n))$ be the ranking of $r(x_1, \dots, x_n)$ within 
$\zeta_i(r(x_1, \dots, x_n))$ based on the score $\score(x)$ for each $x \in \zeta_i(r(x_1,\dots,x_n))$.
In an ideal knowledge hypergraph completion method, $\rank_i(r(x_1, \dots, x_n))$ is $1$ among all corrupted tuples
$\zeta_i(r(x_i,\dots,x_n))$.
We compute the MRR as
$
\frac{1}{N} \sum_{r(x_1, \dots,x_n) \in \tau'_{\mathrm{test}}} 
\sum_{i=1}^{n}\frac{1}{\rank_{i}(r(x_1, \dots,x_n))}
$
where $N = \sum_{r(x_1,\dots x_n) \in \tau'_{\mathrm{test}}} |r|$ is the number of prediction tasks. 
Hit@k measures the proportion of tuples in $\tau'_{\mathrm{test}}$ that rank among the top $k$ in their corresponding corrupted sets. 
Following \citet{TransE}, we remove all corrupted tuples that are in $\tau'$ from our computation of MRR and Hit@k. 

%
%
\section{Experiments}\label{experiments}
We organize our experiments into three groups satisfying three different objectives.
The goal of the first set of experiments is to evaluate the proposed method on real datasets and compare its performance to that of existing work. 
Our second goal is to test the ability of \ourmodel{} to represent the primitive
relational algebra operations. For this purpose, we evaluate our model on a synthetic dataset in which tuples are generated
by repeated application of relational algebra operations. This provides us with a controlled environment where
we know the ground-truth about what operation(s) each relation represents.
The final set of experiments is an ablation study that examines the effect of window size.

\subsection{Results on Real Datasets}
\label{sec:real_experiments}
We evaluate \ourmodel{} on three public datasets \acr{JF17K}, \acr{FB-AUTO}, and \acr{M-FB15K} and compare its performance to that of existing models. 
Our experiments show that \ourmodel{}  outperforms existing knowledge hypergraph completion models 
across all three datasets \acr{JF17K}, \acr{FB-auto}, and \mbox{\acr{m-FB15K}}. Results are summarized in Table~
\ref{table:FB-subsets}. 

\subsection{Results on Synthetic Datasets}
\label{sec:synthetic_experiments}
To evaluate our model in a controlled environment, we create a synthetic dataset whose statistics (e.g., number of entities, number of relations, number of tuples per relation) is proportional to that of \acr{JF17K}. We call this synthetic dataset {\emph{\synthdata{}}}.
We break down the performance of our model based on the type of relational algebra operation and its depth. 
As a first step of the synthetic dataset generation, we create a list of tuples at random. The relations involved in these tuples
are called \emph{elementary} relations, as they are not generated based on any relational algebra operation. 
We then create a set of tuples that are based on \emph{composite} relations: those defined as a sequence of primitive relational algebra operations that are applied recursively to elementary relations. 
We let the \emph{type} of a composite relation be the last operation applied.
We collect our results by type and list them under the corresponding column in Table~\ref{table:JF17KS}.
Note that choice of the \emph{last} operation for defining the type does not change the overall performance of the model; it merely helps us break down the performance to gain insight. As there is no obvious way of defining the type of a composite relation, we experimented with letting it be determined by the first, last, or all operations; our experiments showed little variation between the results.
We define the \emph{depth} of a relation recursively. An elementary relation has depth of $0$. A relation that is the result of a unary operation (e.g., \op{projection} or \op{renaming}) has a depth of one plus the depth of the input relation. For a relation that is the result of a binary operation (e.g., \op{set union} or \op{set difference}), the depth is one plus the maximum depth of the input relations.

The results of our experiments on \synthdata{} are summarized in Table~\ref{table:JF17KS} and 
show that our proposed model outperforms the state-of-the-art in almost all cases. 
As the decomposed performance shows, the improvement to the general result 
is due mostly to improvements in the performance of \op{renaming}, \op{projection}, \op{selection}, \op{set union}, and \op{set difference} as well as improvements for elementary relations. 
These results confirm that our theoretical findings are in line with the practical results.
Note that the reason why we do not have tuples for the \op{selection} operation in the test set is that, by nature, \op{selection} generates very few tuples; and as the split of train/test/validation in \synthdata{} is done at random, the chance of having \op{selection} tuples in the test set is very low and did not occur when we synthesized the dataset (\op{selection} tuples are still present in the train data).

\subsection{Ablation Study on Varying Window Sizes}
\label{sec:ablation_experiments}

We compare the performance of \ourmodel{} when trained with different window sizes to 
empirically study its effect on the results.
The outcome of the study is summarized in Figure~\ref{chart:ws}, which shows the MRR of \ourmodel{}
with the best set of hyperparameters for window sizes of the first five divisors of the embedding dimension. In our framework, a larger window size implies more interaction between the elements of the embeddings.
As the chart shows, the performance of \ourmodel{} 
is at a maximum between window sizes of 2 and 4, but is lower at smaller and larger sizes.
A window size of~1 is the same as having no windows. This result confirms the importance of having windows to ensure interaction between the embedding elements, and also suggests that excessive interaction of embedding elements cannot further improve performance. Therefore, window size is a sensitive hyperparameter that needs to be tuned for the best performance given the dataset.

\vspace{0.5cm}

\begin{figure}[h]
\begin{center}
    \includegraphics[width=0.38\textwidth]{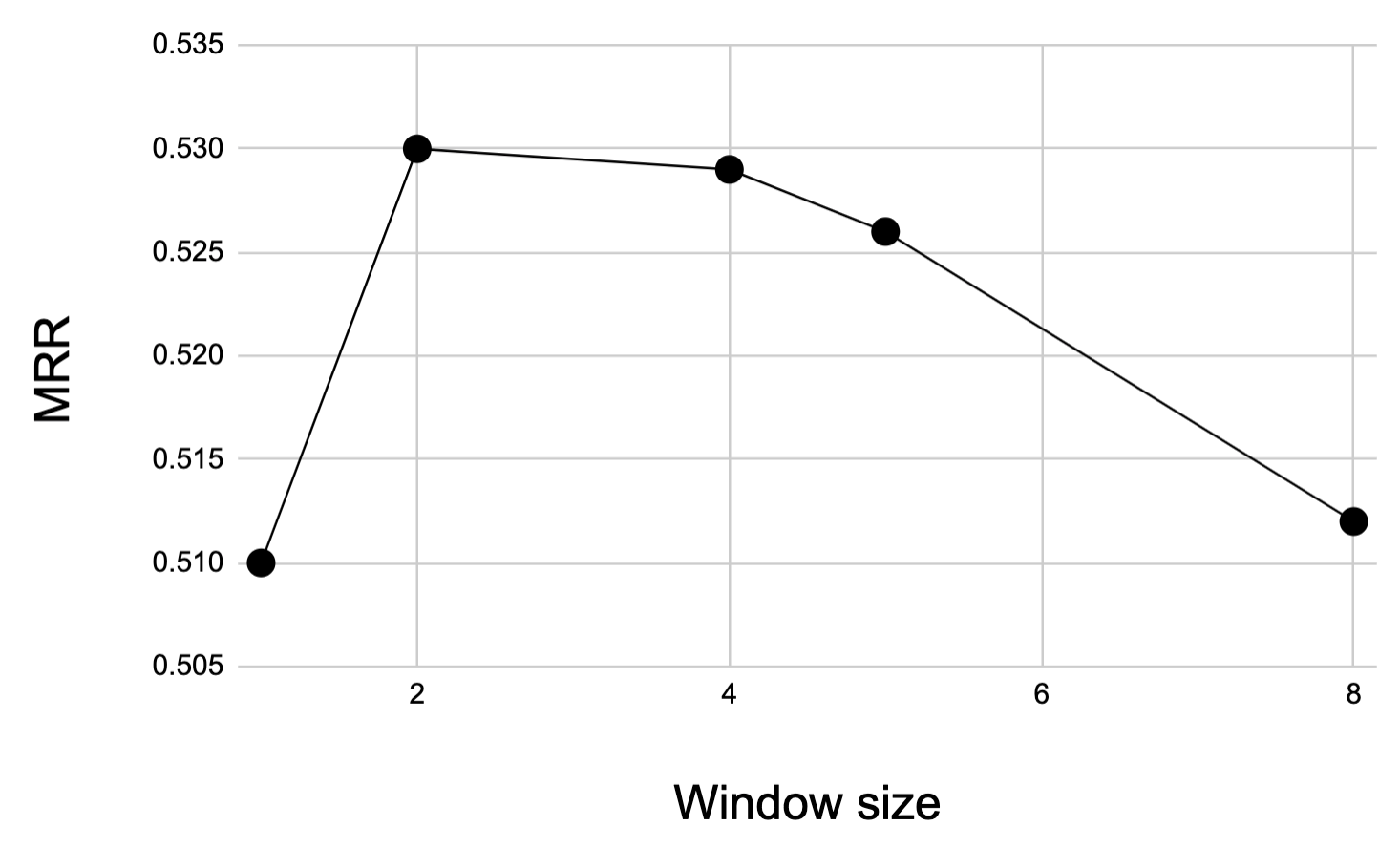}
  \end{center}
  \caption{MRR of \ourmodel{} for window sizes as the first five divisors of the embedding size (200) on \acr{JF17K}.}
  \label{chart:ws}
\end{figure}

\section{Conclusion}
In this work, we introduce \ourmodel{} for reasoning in knowledge hypergraphs.
To design a powerful method, we build on the primitives of relational algebra, which is the calculus of relational models. We prove that \ourmodel{} can represent the primitive relational algebra operations \op{renaming}, \op{projection}, \op{selection}, \op{set union}, and \op{set difference}, and is fully expressive. The results of our experiments on real and synthetic datasets are consistent with the theoretical findings.

\bibliography{bibliography}
\bibliographystyle{icml2021}

\appendix

\setcounter{theorem}{0}

\section{Theoretical Analysis}\label{appendix:theoretical}
The current section groups the theoretical analysis of our work into four parts. 
In particular,
Section~\ref{sec:full-express-reale} proves that \ourmodel{} is fully expressive. Section~\ref{sec:full-express-baselines} proves that m-TransH, RAE, and NaLP are not fully expressive (G-MPNN scoring function and its non-expressiveness were discussed in Section 2 of the main paper).
Section~\ref{sec:relational-algebra-reale} shows how closely \ourmodel{} can represent relational algebra operations.
Section~\ref{appendix:relational-algebra-hype} further proves that HypE cannot represent all relational algebra operations (in particular, we show that HypE cannot represent \op{selection}).

For completeness, we restate the theorems and the scoring function.
In what follows, we use lower case 
$x_1, \dots, x_n$ to denote particular entities, $\bar{x}$ a sequence of particular entities, and $\vc{x}$ the embedding of $x$.
Recall that our model embeds each entity $x_i$ into a vector $\vc{x_i} \in [0,1]^d$ of length~$d$, and each relation $r$ into a matrix $\vc{r} \in \realset^{|r| \times d}$.
Recall that $w$ denotes the window size, $n_w = \lfloor \frac{d}{w} \rfloor$  the number of windows, and $b_r^{j}$ the bias term of relation $r$ for the $j^{th}$ window, for all $j=0,\dots,n_w-1$.
Equation~\ref{eq:main} defines \ourmodel{}'s score of a tuple $r(x_1, x_2, \dots, x_n)$, where~$\sigma$ is a monotonically-increasing nonlinear function that is differentiable almost everywhere and $\theta$ is the set of all entity and relation embeddings.

\begin{equation}\label{eq:main}
\begin{split}
&\score(r(x_1, \dots, x_n)) = \\ 
&\frac{1}{n_w}\sum_{j=0}^{n_w-1} \sigma\left(b_r^{j} + \sum_{i=1}^{|r|} \sum_{k=0}^{w-1} \vc{x_i}[jw + k] \times \vc{r}[i][jw + k]\right)
\end{split}
\end{equation}

\subsection{Full Expressivity of \ourmodel{}}\label{sec:full-express-reale}
The following result proves that there exists a setting of the parameters for which \ourmodel{} can separate true and false tuples for arbitrary input. In particular, we show it for the case where $\sigma$ is the sigmoid function.

\begin{theorem}[Expressivity]
\label{expressive}
For any ground truth over entities \entset{} and relations \relset{} containing $\lambda$ true tuples with $\maxar = \max_{r \in \relset}(|r|)$ as the maximum arity over all relations in \relset{}, there is a \ourmodel{} model with $n_w = \lambda$, $w = \maxar$, $d = \max(\maxar \lambda, \maxar)$, and $\sigma(x) = \frac{1}{1 + \exp(-x)}$ that accurately separates the true tuples from the false ones.
\end{theorem}

\begin{figure*}[t]
\label{fig:expressive}
\begin{center}
    \includegraphics[width=0.8\textwidth, trim={0 0.0cm 0 0.0cm},clip]{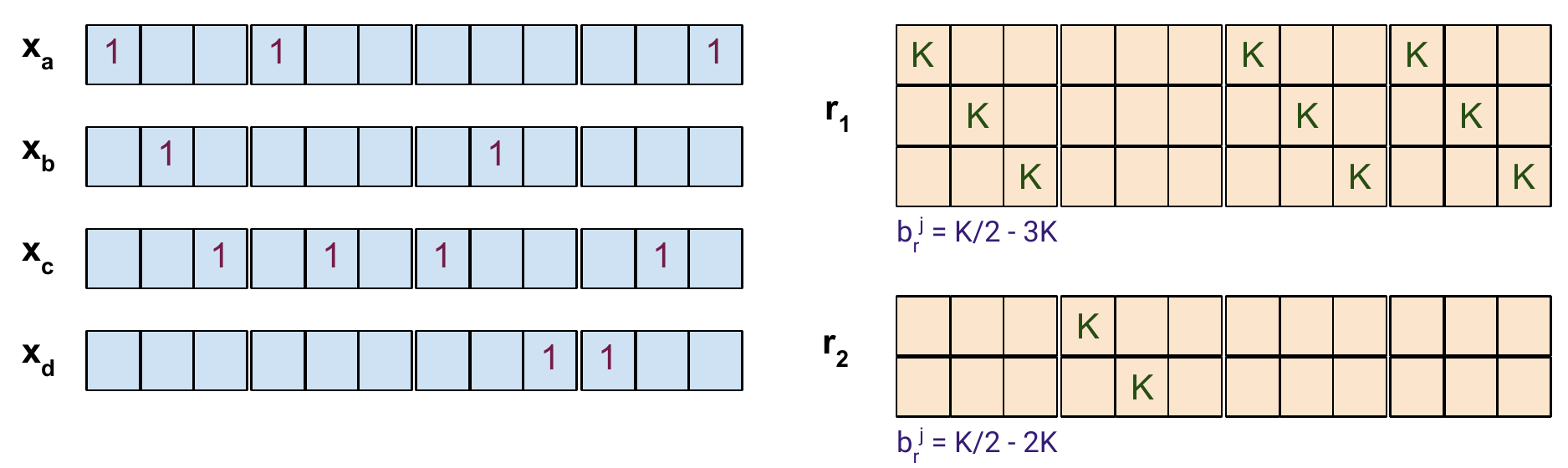} 
  \end{center}
  \caption{An example of a \ourmodel{} embedding assignments for true tuples $\tau_0 = r_1(x_a, x_b, x_c)$, $\tau_1 = r_2(x_a, x_c)$, $\tau_2 = r_1(x_c, x_b, x_d)$ and $\tau_3 = r_1(x_d, x_c, x_a)$. Here, the number of true tuples $\lambda=n_w=4$, maximum arity $\omega = 3$. Cells that are set to zero are left empty for better readability.
As an example, given that $x_b$ is in position $1$ of $\tau_2$, 
we set cell  $2\times \omega + 1 = 7$ of $x_b$ to $1$. We further set the value of $r_1$ at positions [$1$][$7$] to $K$.}
  \label{chart:ws}
\end{figure*}

\begin{proof}
Let $T=\{\tau_0,\tau_1,\dots,\tau_{\lambda-1}\}$ be all the true tuples defined over \entset{} and \relset{}. 
To prove the theorem, we show an assignment of embedding values for each of the entities and relations
in $T$ such that the scoring function of \ourmodel{} is as follows.
\[
  \phi(\tau) 
  \begin{cases}
                                  \geq \frac{1 - \epsilon}{\lambda} & \text{if $\tau \in T$} \\
                                  < \epsilon  & \text{otherwise}
  \end{cases}
\]

Here, $\epsilon$ is an arbitrary small value such that $\epsilon < \frac{1}{2+\lambda^2}$. 
Observe that $\lambda$ is a positive integer.
Therefore, $\epsilon$ and $\frac{1 - \epsilon}{\lambda}$ never meet.

We first consider the case where $\lambda > 0$. 

Let $K = 2 \times \sigma^{-1}(1 - \epsilon)$. Then, $\sigma(\frac{K}{2}) = 1 - \epsilon$ and $\sigma(\frac{-K}{2}) = \epsilon$.
We begin the proof by first describing an assignment of the embeddings of each of the entities and relations in \ourmodel{};
we then proceed to show that with such an embedding, \ourmodel{} accurately separates the true tuples from the false ones.

We consider the embeddings of entities to be $n_w=\lambda$ blocks of size $w$ each, such that each block~$i$ is conceptually associated with true tuple $\tau_i$ for all $0 \leq i < \lambda$. 
Then, for a given entity $x_m$ at position $m$ of tuple $\tau_i$, 
we set the value $\vc{x_m}[$iw + m$]$ to $1$, for all $0 \leq i < \lambda$. 
All other values in $\vc{x_m}$ are set to zero.
For the relation embeddings, first recall that these embeddings are matrices of dimension $|r| \times w\lambda$, 
and we consider each of the $|r|$ rows of this matrix to be $\lambda$ blocks of size $w$, where $|r| > 1$ is the arity of the given relation.
If a given relation $r$ appears in true tuple $\tau_i$, we set the $|r|$ values at position [$iw + k$][$iw + k$] to $K$, for all $0 \leq k < |r|$ and $0 \leq i < \lambda$; all other values in the relation embedding are set to zero.
Finally, we set all the bias terms in our model to $b_r^{j} = \frac{K}{2} - |r|\times K$, for all $0 \leq j < \lambda$.
As an example, consider Figure~\ref{fig:expressive}, in which
the first true tuple $\tau_0$ is defined on relation $r_1$; thus the embedding of $r_1$ has all ones in its first $3 \times 3$ block (note that since $r_1$ is the relation in the third and fourth true tuples, then the third and fourth blocks of the embedding of $r_1$ are also filled with $1$s). $\tau_0$ has entity $x_b$ at its second position; hence the embedding of $x_b$ has a $1$ at its second position.
We claim that with such an assignment, the score of tuples that are true is  $\geq \frac{1 - \epsilon}{\lambda}$ and $< \epsilon$ otherwise.

To see why this assignment works, first observe that our scoring function $\score{}$ is an averaging of 
the sum of $\lambda$ sigmoids; 
each sigmoid is defined on an embedding block where we sum the bias term with the sum of pairwise product between the entity and relation embeddings of the given block.

Let $\tau_p = r(x_1,\dots,x_m)$ be a true tuple and observe the embeddings of its entities and relations. 
The blocks at position $p$ of the embeddings of each of $x_1, \dots, x_m$  
contain exactly one value $1$ each (with the rest being zero); and the
block at position $p$ of the relation embedding for $r$ contains all $K$s. 
With such an assignment, the block at position $p$ will contribute the following sigmoid to the scoring function.
\begin{equation}
\label{eq:express_true}    
\sigma\left(b_r^p + |r|K\right) = \sigma\left(\frac{K}{2} - (|r|K) + (|r| K)\right) = \sigma\left(\frac{K}{2}\right) = 1 - \epsilon
\end{equation}

All other blocks $q \neq p$ in $\tau_p$ contain at least one entity embedding block (for each of $x_a,\dots, x_m$) to be all zeros (because otherwise they will be duplicate tuples in $T$).
Assume that there are $c>0$ entity blocks that are all zeros for block $q$.
Then, we have $|r| - c$ blocks that contain exactly one cell with value $1$, for all $0 \leq q < \lambda$, $q \neq p$.
Looking at $\tau_0$ in the example of Figure~\ref{fig:expressive}, if $q=1$ (second block), then $c=1$ since the second block of $x_b$ is all zeros.

Thus, any block $q$ in $\tau_p$ will contribute to the scoring function in one of two ways,
depending on whether or not $r$ is a relation in $\tau_q$.

If $r$ happens to be a relation in $\tau_q$ (e.g. $r_1$ is a relation in $\tau_0$, $\tau_2$ and $\tau_3$ in Figure~\ref{fig:expressive}), then the $|r| \times w$-sized block at position $q$ of the relation embedding is all set to $K$; 
and since $c > 1$,
\begin{equation}
\begin{split}
\label{eq:express_false_1}    
& \sigma\left(b_r^q +  (|r|-c)K\right) = \\
&\sigma\left(\frac{K}{2} - (|r| K) + (|r|-c) K \right) = \sigma\left(\frac{K}{2}-cK\right) <  \epsilon
\end{split}
\end{equation}

If $r$ is not a relation in $\tau_q$, then the block at position $q$ of the relation embedding is all zeros. 
In this case,
\begin{equation}
\label{eq:express_false_2}    
\sigma\left(b_r^q + 0\right) = \sigma\left(\frac{K}{2} - (|r| K)\right) <  \epsilon
\end{equation}

In the end, the score of any true tuple $\score{}(\tau_p)$ will be the sum of $\lambda$ sigmoids divided by $\lambda$
such that exactly one of these sigmoids (block at position $p$) has a value $1-\epsilon$, while the other $\lambda - 1$ sigmoids  $< \epsilon$.
The score of a true tuple $\tau_p$ can thus be bounded as follows.
\begin{equation}
\label{eq:express_true}    
\score{(\tau_p)} >= \frac{1-\epsilon}{\lambda}
\end{equation}

In the case of a false tuple, all the sigmoids will yield values $< \epsilon$ (as they will be of the forms in either of Equations~\ref{eq:express_false_1}, or \ref{eq:express_false_2}).
The score of a false tuple $\tau_f$ can thus be bounded as follows.
\begin{equation}
\label{eq:express_false}    
\score{(\tau_f)} < \frac{\lambda\epsilon}{\lambda} =  \epsilon
\end{equation}

To complete the proof, we consider the case when $\lambda = 0$.
In this case, we let the entity and relation embeddings be blocks of arbitrary size and set all values to zero.  
We set the bias terms as before.
Then, the score of any tuple will be $\score{(\tau_f)} < \epsilon$ (Equation~\ref{eq:express_false}), which is what we want.
This completes the proof.
\end{proof}

\subsection{Full Expressivity of Some Baselines}\label{sec:full-express-baselines}
The following results prove that some of the baselines are not fully expressive and have severe restrictions on the types of relations they can model. \citet{GETD} discuss some of these limitations, and here we address them more formally.

\begin{theorem}\label{thm:m-TransH}
m-TransH, RAE, and NaLP are not fully expressive and have restrictions on the relations these approaches can represent.
\end{theorem}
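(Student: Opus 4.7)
The plan is to prove non-expressiveness by constructing, for each of \acr{m-TransH}, \acr{RAE}, and \acr{NaLP}, a small ground-truth assignment over a handful of entities and a single relation that no setting of the model's parameters can realize. A useful simplification is that \acr{RAE} and \acr{m-TransH} share essentially the same scoring function (\acr{RAE} only adds a relatedness term to the loss), so any unrealizable assignment for \acr{m-TransH} transfers verbatim to \acr{RAE}.

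For \acr{m-TransH}, I would first recall its scoring function $f(r(x_1,\dots,x_n)) = \lVert \sum_{i} a_{r,i}\, P_{w_r}(\vc{x_i})\rVert$, where $P_{w_r}$ is projection onto the hyperplane orthogonal to a relation-specific normal $w_r$, and a tuple is declared true when $f$ lies below some threshold. I would then exhibit a $2$-arity counterexample: entities $a, b$ with ground truth $r(a,a)$ true, $r(b,b)$ true, $r(a,b)$ true, and $r(b,a)$ false. A case split on $\alpha = a_{r,1} + a_{r,2}$ yields the contradiction: if $\alpha \neq 0$, the two reflexive truths force $P_{w_r}(\vc{a}) = P_{w_r}(\vc{b}) = 0$, making all four tuples score zero and hence true; if $\alpha = 0$, the scoring function reduces to $\lVert a_{r,1}(P_{w_r}(\vc{a}) - P_{w_r}(\vc{b}))\rVert$, which is symmetric in its two arguments and therefore assigns $r(a,b)$ and $r(b,a)$ the same score. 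Either branch contradicts the stipulated ground truth, so no \acr{m-TransH} parametrization realizes it; extending to higher arities is a matter of padding with unconstrained positions.

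For \acr{NaLP}, I would exploit the fact that its scoring function aggregates per-position role-value compatibility scores through a shared neural pipeline whose aggregation is invariant under certain role-preserving rearrangements of the input pairs. I would construct a small ground truth over a single relation in which a non-symmetric dependence among positions must hold, and then argue that the invariance of the aggregation forces a specific pair of tuples (differing only by such a rearrangement) to receive identical scores, contradicting the target truth assignment.

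The main obstacle will be the \acr{NaLP} argument, since its convolutional and fully-connected stages are algebraically heavier than \acr{m-TransH}'s closed-form projection; isolating the precise invariance of the pipeline and packaging it into a clean pair of tuples that must be scored identically is the most delicate step. By contrast, the \acr{m-TransH} (and hence \acr{RAE}) case is resolved entirely by the $\alpha = 0$ vs.\ $\alpha \neq 0$ dichotomy once the scoring function is written out, so the bulk of the technical work will go into the \acr{NaLP} counterexample.
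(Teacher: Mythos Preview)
Your \acr{m-TransH} argument has a genuine gap. First, the actual \acr{m-TransH} score includes a relation-specific bias, i.e.\ $f(r(x_1,\dots,x_n)) = \bigl\lVert \sum_i a_{r,i} P_{w_r}(\vc{x_i}) + \vc{b}_r \bigr\rVert$, which you omitted. Second, and more seriously, even without the bias your $\alpha\neq 0$ branch is wrong under the threshold semantics you yourself adopt: ``$r(a,a)$ true'' only gives $\lVert P_{w_r}(\vc{a})\rVert < \tau/|\alpha|$, not $P_{w_r}(\vc{a})=0$. In fact your four-tuple ground truth \emph{is} realizable by \acr{m-TransH}: working in one projected dimension with $a_{r,1}=3$, $a_{r,2}=-1$, $\vc{b}_r=0$, $P_{w_r}(\vc{a})=0.1\tau$, $P_{w_r}(\vc{b})=0.45\tau$, the scores are $0.2\tau,\,0.9\tau,\,0.15\tau,\,1.25\tau$ for $r(a,a),r(b,b),r(a,b),r(b,a)$ respectively, so the first three are below threshold and the last is above. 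With the bias present the $\alpha=0$ branch also fails (take $a_{r,1}=-a_{r,2}=1$, $\vc{b}_r=1$, projections $0$ and $2$, threshold $2$). So your proposed ground truth is not a counterexample at all.

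For \acr{NaLP}, the plan is too vague to assess: ``invariance under role-preserving rearrangements'' is not a property of the architecture as described (each position carries its own role embedding, so permuting positions changes the input), and you have not identified a concrete pair of tuples that must receive equal scores.

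The paper's route is far lighter than what you are attempting. It simply observes that, restricted to binary relations, \acr{m-TransH} reduces to \acr{TransH}, and invokes the non-expressivity restrictions already proved for \acr{TransH} in \citet{kazemi2018simple}; \acr{RAE} inherits the same restriction because it uses the same scoring function. For \acr{NaLP}, the key step is to unwind the architecture and argue that its binary score has the translational shape $\lvert \vc{P}_1\vc{x}_1 + \vc{P}_2\vc{x}_2 + \vc{r}\rvert_1$, after which the same cited result applies. If you want a self-contained proof, you should either reproduce the Kazemi--Poole argument (which uses a reflexivity-forces-symmetry construction over \emph{all} entities, not just two) or, at minimum, pick a ground truth that you can actually show is unrealizable for every setting of the parameters including the bias.
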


The proof of the above theorem follows from Lemma~\ref{lemma:m-TransH} and Lemma~\ref{nalp} below.

\begin{customlemma}{2.1}
\label{lemma:m-TransH}
m-TransH and RAE are not fully expressive and have restriction on what relations these
approaches can represent.
\end{customlemma}

\begin{proof}
\citeauthor{kazemi2018simple} (\citeyear{kazemi2018simple}) prove that TransH~\cite{TransH} is not fully expressive and explore its restrictions. As m-TransH reduces to TransH for binary relations, it inherits all its restrictions and is not fully expressive. RAE also follows the same strategy as m-TransH in modeling the relations. Therefore, both m-TransH and RAE are not fully expressive.
\end{proof}

\begin{customlemma}{2.2}
\label{nalp}
NaLP is not fully expressive.
\end{customlemma}

\begin{proof}
NaLP first concatenates the embeddings of entities and the embedding of their corresponding roles in the tuples, then applies to them the following functions: 1D convolution, projection layer, minimum, and another projection layer. Looking carefully at the output of the model, the NaLP scoring function for a tuple $r(x_1, \dots, x_n)$ is in the form of $|\vc{P_1} \vc{x_1} + \dots + \vc{P_n} \vc{x_n} + \vc{r}|_1$ with $\vc{P_i}$ as learnable diagonal matrices with some shared parameters. In the binary setup ($n = 2$), the score function of NaLP is $|\vc{P_1} \vc{x_1} + \vc{P_2} \vc{x_2} + \vc{r}|_1$. \citeauthor{kazemi2018simple} (\citeyear{kazemi2018simple}), however, proved that translational methods having a score function of $|\vc{P_1} \vc{x_1} - \alpha \vc{P_2} \vc{x_2} + \vc{r}|_i$ are not fully expressive and have severe restrictions on what relations these approaches can represent. NaLP has the same score function with $\alpha = -1$ and $i = 1$ and therefore is not fully expressive.
\end{proof}

\subsection{Representing Relational Algebra with \ourmodel{}}\label{sec:relational-algebra-reale}
 
\begin{theorem}[\op{Renaming}]
\label{le:renaming}
Given  permutation function $\pi$, and
relation $s$, there exists a parametrization for relation $t$ in \ourmodel{} such that
for  entities $x_1,\dots,x_n$, with arbitrary embeddings
$$\score(t(x_1, \dots, x_n)) =  \score(s(x_{\pi(1)}, x_{\pi(2)},\dots,x_{\pi(n)}))$$
\end{theorem}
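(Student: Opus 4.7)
The plan is to exhibit an explicit parametrization of $t$ in terms of $s$'s parameters and then verify the claimed equality by a one-line change of summation variable in the scoring function from Equation~\ref{eq:main}. Intuitively, because row $i$ of a relation matrix in \ourmodel{} plays the role of the ``operator applied to the $i$-th argument'', if we want the $i$-th argument slot of $t$ to behave like the $\pi^{-1}(i)$-th slot of $s$, then we should place $\vc{s}[\pi^{-1}(i)]$ into row $i$ of $\vc{t}$.

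Concretely, I would set $|t| = |s| = n$ and define
$\vc{t}[i][jw+k] \;:=\; \vc{s}[\pi^{-1}(i)][jw+k]$
for every $1 \le i \le n$, $0 \le j < n_w$, and $0 \le k < w$, together with the bias assignment $b_t^{j} := b_s^{j}$ for each window index $j$. All entity embeddings are left untouched, matching the ``arbitrary embeddings'' clause in the statement.

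Next, I would substitute this choice into $\score(t(x_1,\dots,x_n))$ and, inside the inner double sum of each window $j$, apply the reindexing $i' = \pi^{-1}(i)$, i.e., $i = \pi(i')$. Since $\pi$ is a bijection on $\{1,\dots,n\}$, as $i$ runs over this set so does $i'$, so
$\sum_{i=1}^{n}\sum_{k=0}^{w-1} \vc{x_i}[jw+k]\times\vc{s}[\pi^{-1}(i)][jw+k]$
becomes
$\sum_{i'=1}^{n}\sum_{k=0}^{w-1} \vc{x_{\pi(i')}}[jw+k]\times\vc{s}[i'][jw+k]$.
Combined with $b_t^{j}=b_s^{j}$, the same $\sigma$, and the same outer averaging $\tfrac{1}{n_w}\sum_{j}$, the resulting expression is term-by-term identical to $\score(s(x_{\pi(1)},\dots,x_{\pi(n)}))$, which is what we want.

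I do not expect any real obstacle here: the proof is constructive and essentially reduces to a bookkeeping identity. The only delicate point is to keep the direction of $\pi$ versus $\pi^{-1}$ straight (defining $\vc{t}$ in terms of $\vc{s}\circ\pi^{-1}$ rather than $\vc{s}\circ\pi$), and to notice that bijectivity of $\pi$ on the finite set $\{1,\dots,n\}$ is exactly what licenses the change of summation index; both are immediate from the definition of $\pi$ in the Definition and Notation section.
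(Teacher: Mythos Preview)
Your proposal is correct and is essentially the same argument as the paper's: the paper sets $\vc{t}[\pi(i)][k]=\vc{s}[i][k]$ and $b_t^j=b_s^j$, which is exactly your assignment $\vc{t}[i]=\vc{s}[\pi^{-1}(i)]$ written with the index on the other side, and then performs the same bijective reindexing of the inner sum. The only cosmetic difference is direction---the paper expands $\score(s(x_{\pi(1)},\dots))$ and rewrites it as $\score(t(x_1,\dots))$, whereas you go the other way---but the substance is identical.
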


\begin{proof}
To prove the above statement, we show that given entity embeddings $\vc{x_1},\dots,\vc{x_n}$ and an embedding for $s$, there exists a parametrization for $t$ that satisfies the above equality. 

We claim that the following settings for $t$ are enough to show the theorem.
\begin{enumerate}
    \item $\vc{t}[\pi(i)][k] = \vc{s}[i][k] ~~~ \forall ~ 1 \leq i \leq n, ~\forall~ 0 \leq k < d$, and
    \item $b^j_{t} = b^j_s ~~~ \forall ~ 0 \leq j < n_w$ 
\end{enumerate}%

To see why, we simply expand the score function of $s$ and replace the values for $s$ by that of $t$ as described, to obtain the score for $t$.

\begin{dmath*}
    \score(s(x_{\pi(1)}, x_{\pi(2)},\dots,x_{\pi(n)})) =
    \frac{1}{n_w}\sum_{j=0}^{n_w-1} \sigma(b_s^{j} + \sum_{i=1}^{|s|} \sum_{k=0}^{w-1} \vc{x}_{\pi(i)}[j\times w + k] \times \vc{s}[i][j\times w + k]) = 
    \frac{1}{n_w}\sum_{j=0}^{n_w-1} \sigma(b_t^{j} + \sum_{i=1}^{|s|} \sum_{k=0}^{w-1} \vc{x}_{\pi(i)}[j\times w + k] \times \vc{t}[\pi(i)][j\times w + k]) = 
    \frac{1}{n_w}\sum_{j=0}^{n_w-1} \sigma(b_t^{j} + \sum_{i=1}^{|t|} \sum_{k=0}^{w-1} \vc{x}_{i}[j\times w + k] \times \vc{t}[i][j\times w + k]) =
    \score(t(x_1, x_2,\dots,x_n)) %
\end{dmath*}%
\end{proof}

\begin{theorem}[\op{Projection}]
For any relation $s$ on $n$ arguments there exists a parametrization for relation $t$ on $m<n$ arguments 
 in \ourmodel{} such that
for any arbitrary sequence $x_1,\dots, x_n$
$$\score(t(x_1,\dots, x_m)) \geq  \score(s(x_1,\dots, x_n))$$
\end{theorem}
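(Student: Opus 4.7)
The plan is to construct the embedding of $t$ directly from the embedding of $s$ by copying the first $m$ rows of $\vc{s}$ into $\vc{t}$ and then absorbing the maximum possible contribution of the dropped rows $m+1,\dots,n$ into the bias of $t$. Concretely, I would set $\vc{t}[i][k] = \vc{s}[i][k]$ for all $1 \le i \le m$ and all $0 \le k < d$, and for each window $j = 0, \dots, n_w - 1$ define a correction term
\begin{equation*}
M_j \;=\; \sum_{i=m+1}^{n} \sum_{k=0}^{w-1} \max\bigl(\vc{s}[i][jw+k],\, 0\bigr),
\end{equation*}
then let $b_t^{j} = b_s^{j} + M_j$.

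The key observation driving the construction is that entity embeddings are constrained to $[0,1]^d$, whereas the relation entries live in $\realset$. Hence for the extra rows $i = m+1,\dots,n$ in the score of $s$, the term $\vc{x_i}[jw+k] \times \vc{s}[i][jw+k]$ is bounded above by $\max(\vc{s}[i][jw+k], 0)$, so summing over $i$ and $k$ gives an upper bound of exactly $M_j$ on their joint contribution in window $j$. This is the pointwise inequality that makes the bias correction tight enough to dominate any choice of $x_{m+1},\dots,x_n$.

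Given this, the proof proceeds window-by-window. For each $j$, I would compare the pre-sigmoid arguments: the argument for $t$ equals $b_s^{j} + M_j + \sum_{i=1}^{m} \sum_{k=0}^{w-1} \vc{x_i}[jw+k] \times \vc{s}[i][jw+k]$, while the argument for $s$ equals $b_s^{j} + \sum_{i=1}^{n} \sum_{k=0}^{w-1} \vc{x_i}[jw+k] \times \vc{s}[i][jw+k]$. Their difference is $M_j - \sum_{i=m+1}^{n}\sum_{k=0}^{w-1}\vc{x_i}[jw+k]\times \vc{s}[i][jw+k] \ge 0$ by the bound above. Since $\sigma$ is monotonically increasing, the per-window sigmoid for $t$ is at least that for $s$. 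Averaging over the $n_w$ windows (which is the same value for both tuples) preserves the inequality and yields $\score(t(x_1,\dots,x_m)) \ge \score(s(x_1,\dots,x_n))$.

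The main obstacle is not an analytic difficulty but a modeling one: because $\vc{s}$ can contain both positive and negative entries, a naive bias shift (for instance, using the raw sum of the dropped rows) would only work for non-negative relation embeddings. The nontrivial step is choosing $M_j$ to be the sum of the \emph{positive parts} of the dropped entries so that the inequality holds uniformly over all possible entity assignments. One then notes, as remarked immediately after the theorem statement, that equality cannot be obtained in general because several distinct $n$-tuples of $s$ may collapse to the same $m$-tuple of $t$, so the inequality is sharp up to this collapse.
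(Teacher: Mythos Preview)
Your proposal is correct and is essentially the same construction as the paper's proof: copy the first $m$ rows of $\vc{s}$ into $\vc{t}$ and inflate each bias $b_t^j$ by the sum of the positive parts of the dropped entries in that window, then use $\vc{x_i}\in[0,1]^d$ and monotonicity of $\sigma$ to conclude. If anything, your write-up is more explicit than the paper's (you keep the window index $jw+k$ in $M_j$ and spell out the per-window pre-sigmoid comparison), but the underlying idea is identical.
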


\begin{proof}
To prove the above statement, we first expand the score function of each side of the inequality. 

\begin{dmath}
    \score(t(x_1,\dots, x_m)) =
    \frac{1}{n_w}\sum_{j=0}^{n_w-1} \sigma(b_t^{j} + \sum_{i=1}^{m} \sum_{k=0}^{w-1} \vc{x}_{i}[j\times w + k] \times \vc{t}[i][j\times w + k]) 
\end{dmath}

\begin{dmath}
    \score(s(x_1,\dots, x_n)) =
    {\frac{1}{n_w}\sum_{j=0}^{n_w-1} \sigma(b_s^{j} + 
    \sum_{i=1}^{m} \sum_{k=0}^{w-1} \vc{x}_{i}[j\times w + k] \times \vc{s}[i][j\times w + k]} + ~ 
    \sum_{i=m+1}^{n} \sum_{k=0}^{w-1} \vc{x}_{i}[j\times w + k] \times \vc{s}[i][j\times w + k]) 
\end{dmath}

The theorem holds with the following assignments.

\begin{enumerate}
    \item $\vc{t}[i][k] = \vc{s}[i][k] ~~~ \forall ~ 1 \leq i \leq m, ~\forall ~ 0 \leq k < d$, and
    \item $b^j_{t} = b^j_s + \displaystyle\sum_{i=m+1}^{n} \sum_{k=0}^{w-1} \max(0, \vc{s}[i][k])  ~~~
          \forall ~ 0 \leq j < n_w $
\end{enumerate}%
\end{proof}

\begin{theorem}[\op{Selection 1}]
\label{lemma:selection1}
For arbitrary relation $s$, there exists a parametrization for relation $t$ in \ourmodel{} such that
for arbitrary entities $x_1,\dots,x_{n-1}$
\begin{equation*}
    \score(t(x_1,\dots,x_{n-1})) = \score(s(x_1,\dots,x_{n-1}, x_{n-1}))\\
\end{equation*}
\end{theorem}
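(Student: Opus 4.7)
The plan is to exhibit an explicit parametrization of $t$ in terms of $s$ such that the two scores coincide for every choice of entity embeddings. The driving observation is that in $\score(s(x_1,\dots,x_{n-1},x_{n-1}))$, the embedding $\vc{x_{n-1}}$ appears twice: once multiplied against row $n-1$ of $\vc{s}$ and once against row $n$. Since the argument inside $\sigma$ is linear in the rows of the relation matrix, these two contributions can be merged into a single row of a smaller matrix $\vc{t}$ of dimension $(n-1)\times d$.

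Concretely, I would propose the assignment
\begin{align*}
\vc{t}[i][k] &= \vc{s}[i][k] \quad \text{for } 1 \leq i \leq n-2, \ 0 \leq k < d, \\
\vc{t}[n-1][k] &= \vc{s}[n-1][k] + \vc{s}[n][k] \quad \text{for } 0 \leq k < d, \\
b_t^{j} &= b_s^{j} \quad \text{for } 0 \leq j < n_w.
\end{align*}
Then I would expand $\score(s(x_1,\dots,x_{n-1},x_{n-1}))$ using Equation~\ref{eq:main}, isolate the two terms of the inner sum indexed by $i=n-1$ and $i=n$ (both of which pair with $\vc{x_{n-1}}$), and factor out $\vc{x_{n-1}}[jw+k]$ in each window. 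This produces a coefficient $\vc{s}[n-1][jw+k] + \vc{s}[n][jw+k]$, which by construction equals $\vc{t}[n-1][jw+k]$. The remaining terms $i=1,\dots,n-2$ already match $\vc{t}[i][jw+k]$ row by row. Summing everything back up recovers exactly the expression for $\score(t(x_1,\dots,x_{n-1}))$.

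There is essentially no hard step: the proof hinges entirely on the linearity of the pre-activation inside $\sigma$ with respect to the relation rows, combined with the fact that the same vector $\vc{x_{n-1}}$ multiplies two distinct rows of $\vc{s}$. The only thing that requires minor care is bookkeeping over the window index $j$ and the intra-window offset $k$, to confirm that the row-wise identification of $\vc{t}$ and $\vc{s}$ is consistent across every block of the embedding and that the same bias $b_s^j$ transfers unchanged to $b_t^j$, so that the equality holds window-by-window before the outer average $\tfrac{1}{n_w}\sum_j$ is applied.
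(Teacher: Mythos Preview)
Your proposal is correct and matches the paper's own proof essentially line for line: the paper uses the identical parametrization $\vc{t}[i][k]=\vc{s}[i][k]$ for $i\le n-2$, $\vc{t}[n-1][k]=\vc{s}[n-1][k]+\vc{s}[n][k]$, and $b_t^j=b_s^j$, and verifies the equality by expanding the score of $s$, grouping the two occurrences of $\vc{x}_{n-1}$, and reading off the score of $t$.
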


\begin{proof}
To prove the above statement, we first expand the score function of $\score$ for the right side of the equality. 
\begin{equation}
\label{sel1:score s}
\begin{split}
    & \score(s(x_1,\dots,x_{n-1}, x_{n-1})) \\ =
    & \frac{1}{n_w}\sum_{j=0}^{n_w-1} \sigma(b_s^{j} + 
    \sum_{i=1}^{n-2} \sum_{k=0}^{w-1} \vc{x}_{i}[j\times w + k] \times \vc{s}[i][j\times w + k] \\
    & +
    \sum_{k=0}^{w-1} \vc{x}_{n-1}[j\times w + k] \times \vc{s}[n-1][j\times w + k] \\
    & +
    \sum_{k=0}^{w-1} \vc{x}_{n-1}[j\times w + k] \times \vc{s}[n][j\times w + k]) \\
    = &
    ~(\text{grouping terms}) \\
    & \frac{1}{n_w}\sum_{j=0}^{n_w-1} \sigma(b_s^{j} + 
    \sum_{i=1}^{n-2} \sum_{k=0}^{w-1} \vc{x}_{i}[j\times w + k] \times \vc{s}[i][j\times w + k] \\
    & +
    {\sum_{k=0}^{w-1} \vc{x}_{n-1}[j\times w + k] \times (\vc{s}[n - 1][j\times w + k]} + \vc{s}[n][j\times w + k]))
\end{split}
\end{equation}

For the lemma to hold, the above score has to be equal to that of $t$, which is described as follws.
\begin{dmath}
\label{sel1:score r}
    \score(t(x_1,\dots,x_{n-1})) = 
    {\frac{1}{n_w}\sum_{j=0}^{n_w-1} \sigma(b_t^{j} + 
    \sum_{i=1}^{n-2} \sum_{k=0}^{w-1} \vc{x}_{i}[j\times w + k] \times \vc{t}[i][j\times w + k]} +
    {\sum_{k=0}^{w-1} \vc{x}_{n-1}[j\times w + k] \times \vc{t}[n-1][j\times w + k])}
\end{dmath}

The scores in Equations~\ref{sel1:score s} and \ref{sel1:score r} are equal when the embedding and bias of $t$ are set as follows.

\begin{enumerate}
    \item $\vc{t}[i][k] = \vc{s}[i][k] ~~~ \forall ~ 1 \leq i \leq n - 2, ~\forall~ 0 \leq k < d$, and 
    \item $\vc{t}[n-1][k] = \vc{s}[n-1][k] + \vc{s}[n][k]  ~~~   ~\forall~0 \leq k < d$, and 
    \item $b^j_{t} = b^j_s  ~~~   \forall~0 \leq j < n_w $%
\end{enumerate}
\end{proof}

\begin{theorem}[\op{Selection 2}]
For arbitrary relation $s$ and for a fixed constant $c$, there exists a parametrization for relation~$t$ in \ourmodel{} such that
for arbitrary entities $x_1, \dots, x_{n-1}$
\begin{equation*}
    \score(t(x_1,\dots,x_{n-1})) = \score(s(x_1,\dots,x_{n-1}, c))
    \end{equation*}
\end{theorem}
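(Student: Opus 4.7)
The plan is to mirror the argument used for \op{Selection 1}, since the only real difference is that the repeated argument $x_{n-1}$ there is here replaced by a fixed constant $c$. First I would expand both sides of the claimed equality using Equation~\eqref{eq:main}. The right-hand side, $\score(s(x_1,\ldots,x_{n-1},c))$, is, for each window $j$, a sigmoid whose argument splits naturally into three parts: (i) the bias $b_s^j$; (ii) the $n-1$ dot products involving the free entities $x_1,\ldots,x_{n-1}$ against the first $n-1$ rows of $\vc{s}$; and (iii) one additional dot product involving only the fixed embedding $\vc{c}$ against the last row of $\vc{s}$.

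The key observation is that term (iii) depends on neither the free entities nor any learnable parameters of $t$: because $c$ is fixed, it is simply a scalar (per window $j$) that is completely determined once $s$ is chosen. I would therefore propose the parametrization
\begin{equation*}
\vc{t}[i][k] = \vc{s}[i][k] \quad \text{for all } 1 \le i \le n-1,\ 0 \le k < d,
\end{equation*}
together with the bias adjustment
\begin{equation*}
b_t^{j} = b_s^{j} + \sum_{k=0}^{w-1} \vc{c}[jw+k] \times \vc{s}[n][jw+k] \quad \text{for all } 0 \le j < n_w.
\end{equation*}
Substituting these into the expansion of $\score(t(x_1,\ldots,x_{n-1}))$ should reproduce, window by window, the expansion of $\score(s(x_1,\ldots,x_{n-1},c))$, so that the equality follows after averaging over the $n_w$ windows.

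I do not anticipate a real obstacle. Unlike the \op{Projection} case, which only permits an inequality because the projected-out argument is existentially quantified over all entities, selection at a fixed constant admits equality: the contribution of the last argument is a deterministic scalar per window and can be absorbed exactly into the bias. The only bookkeeping is that the absorption must be carried out independently for each window $j$, which poses no difficulty because \ourmodel{}'s scoring function treats windows independently inside the outer average, and the fixed embedding $\vc{c}$ likewise decomposes cleanly into its per-window blocks.
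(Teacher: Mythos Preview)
Your proposal is correct and matches the paper's proof essentially line for line: the paper sets $\vc{t}[i][k]=\vc{s}[i][k]$ for $1\le i\le n-1$ and absorbs the fixed last-argument contribution into the bias via $b_t^{j}=b_s^{j}+\sum_{k=0}^{w-1}\vc{s}[n][jw+k]\times\vc{c}[jw+k]$, exactly as you do. Your explanatory remarks about why equality (rather than the inequality of \op{Projection}) is attainable here are a nice addition.
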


\begin{proof}
Using similar score expansions as in the proof of Lemma~\ref{lemma:selection1}, we can rewrite the scores of the two relations as follows.
\begin{equation}\label{sel:score s}
\begin{split}
    & \score(s(x_1,\dots,x_{n-1}, c)) \\ & =
     \frac{1}{n_w}\sum_{j=0}^{n_w-1} \sigma(b_s^{j} + \sum_{i=1}^{n-2} \sum_{k=0}^{w-1} \vc{x}_{i}[j\times w + k] \times \vc{s}[i][j\times w + k] \\
    & +
    \sum_{k=0}^{w-1} \vc{x}_{n-1}[j\times w + k] \times \vc{s}[n-1][j\times w + k] \\
    & + 
    \sum_{k=0}^{w-1} \vc{c}[j\times w + k] \times \vc{s}[n][j\times w + k])
\end{split}
\end{equation}
\begin{dmath}\label{sel:score r}
    {\score(t(x_1,\dots,x_{n-1}))} = 
    \frac{1}{n_w}\sum_{j=0}^{n_w-1} \sigma(b_t^{j} + 
    {\sum_{i=1}^{n-2} \sum_{k=0}^{w-1} \vc{x}_{i}[j\times w + k] \times \vc{t}[i][j\times w + k]} +
    {\sum_{k=0}^{w-1} \vc{x}_{n-1}[j\times w + k] \times \vc{t}[n-1][j\times w + k])} 
\end{dmath}
The scores in Equations~\ref{sel:score s} and \ref{sel:score r} are equal when the embedding and bias of $t$ are as set follows.
%
\begin{enumerate}
    \item $\vc{t}[i][k] = \vc{s}[i][k] ~~~ \forall ~ 1 \leq i \leq n-1, ~\forall~ 0 \leq k < d$, and 
    \item $b^j_{t} = b^j_s + \sum_{k=0}^{w-1} \vc{s}[n][j\times w + k] \times \vc{c}[j\times w + k] ~~~   \forall~0 \leq j < n_w $
\end{enumerate}
\end{proof}

\begin{theorem}[\op{Set Union}]
For arbitrary relations $s$ and $r$ with the same arity, there exists a parametrization for relation $t$ in \ourmodel{} such that 
for arbitrary entity set $\bar{x}$
$$\score(t(\bar{x})) \geq \max(\score(s(\bar{x})), \score(r(\bar{x})))$$ 
\end{theorem}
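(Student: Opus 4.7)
My approach would be to construct the parameters of $t$ by taking the entry-wise maximum of those of $s$ and $r$, and then exploit two structural features of \ourmodel{}: (i) entity embeddings live in $[0,1]^d$ and are thus non-negative, and (ii) the nonlinearity $\sigma$ is monotonically increasing. Together these let a pointwise inequality on the pre-sigmoid argument propagate all the way to the final score, so each of the $n_w$ windows of $t$ dominates the corresponding window of both $s$ and $r$.

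Concretely, I would set $\vc{t}[i][k] = \max(\vc{s}[i][k], \vc{r}[i][k])$ for all $1 \leq i \leq |t|$ and $0 \leq k < d$, and $b_t^j = \max(b_s^j, b_r^j)$ for all $0 \leq j < n_w$. For a fixed entity tuple $\bar{x}$ and window index $j$, let $A_j^u$ denote the pre-sigmoid argument inside the $j$-th summand of $\score$ for $u \in \{s,r,t\}$. Because each coordinate $\vc{x_i}[jw+k] \geq 0$, the inequality $\vc{t}[i][jw+k] \geq \vc{s}[i][jw+k]$ survives multiplication, so summing term by term and combining with $b_t^j \geq b_s^j$ yields $A_j^t \geq A_j^s$; by the symmetric argument $A_j^t \geq A_j^r$, and hence $A_j^t \geq \max(A_j^s, A_j^r)$.

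Applying the monotonically increasing $\sigma$ then gives $\sigma(A_j^t) \geq \max(\sigma(A_j^s), \sigma(A_j^r))$ for every $j$. Averaging over $j$ and using the elementary inequality $\sum_j \max(a_j, b_j) \geq \max(\sum_j a_j, \sum_j b_j)$ produces $\score(t(\bar{x})) \geq \max(\score(s(\bar{x})), \score(r(\bar{x})))$, which is the desired bound.

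I do not anticipate a substantive obstacle. The one subtlety worth flagging in the writeup is that the entry-wise max construction preserves the inequality only because entity coordinates are constrained to $[0,1]^d$: if a coordinate $\vc{x_i}[jw+k]$ were allowed to be negative, multiplying by it would reverse the direction of that term and the argument would fail. So the bound genuinely relies on the model's choice of entity embedding range, not just on the shape of $\sigma$, and no case analysis on $\sigma$ beyond monotonicity is required. The proof should therefore be short and parallel in style to the construction used for \op{projection}.
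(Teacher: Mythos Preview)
Your proposal is correct and follows exactly the same construction as the paper: set $\vc{t}[i][k] = \max(\vc{s}[i][k], \vc{r}[i][k])$ and $b_t^j = \max(b_s^j, b_r^j)$, then use the non-negativity of entity embeddings together with the monotonicity of $\sigma$ to propagate the window-wise inequality to the final score. Your write-up is in fact more explicit than the paper's own proof, which simply states the assignment and asserts that the inequality ``can be observed'' given that entities are embedded in non-negative vectors.
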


\begin{proof}
Given that \ourmodel{} embeds entities in non-negative vectors and examining the scoring functions of each of the relations $t$, $r$, $s$, it can be observed that the above
inequality holds by setting the following values.
\begin{enumerate}
\item $t[i][k] = \max(s[i][k], r[i][k])$  ~~~ $\forall ~~ 1\leq i < |\bar{x}|$ and $0 \leq k < d$.
\item $b^j_t = \max(b^j_s, b^j_r)$ ~~~ $\forall ~~ 0 \leq j < n_w$.
\end{enumerate}\end{proof}

In the lemma that follows, recall that the \emph{relation complement} function (if it exists) is some linear function $f(\score(r({\bar{x}}))) = \score(\neg r({\bar{x}}))$ for arbitrary relation $r$ and entities $\bar{x}$ that also has the form $f(\sigma(x)) = \sigma(c \times x)$ with $c$ as a constant. For instance, when $\sigma$ is sigmoid, $f(x) = 1 - x$ and $c = -1$ ($f(\sigma(x)) = 1 - \sigma(x) = \sigma(-x)$) and when $\sigma$ is tanh, $f(x) = -x$ and $c = -1$ ($f(\sigma(x)) = -\sigma(x) = \sigma(-x)$).

\begin{theorem}[\op{Set Difference}]
\label{le:diff}
For arbitrary relations $r$ and $s$ with the same arity, if $f$ is a linear relation complement function and $f(\sigma(x)) = \sigma(c \times x)$ with $c$ as a constant, 
there exists a parametrization for relation $t$ in \ourmodel{} such that
for arbitrary entities $x_1, \dots, x_n$ 
$$\score(t(\bar{x})) \leq \min(\score(s(\bar{x})), f(\score(r(\bar{x}))))$$ 
\end{theorem}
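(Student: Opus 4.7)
The plan is to mirror the construction used for \op{Set Union} (Theorem~\ref{lemma:union}), but replace the coordinatewise $\max$ by a $\min$ taken between the parameters of~$s$ and a suitably rescaled copy of the parameters of~$r$. The key observation is that, by hypothesis, $f$ is linear and satisfies $f(\sigma(x)) = \sigma(cx)$ for some constant~$c$. Because of linearity, $f$ commutes with the $\frac{1}{n_w}$-average in Equation~\ref{eq:main}, and so
\[
f(\score(r(\bar{x}))) = \frac{1}{n_w}\sum_{j=0}^{n_w-1} \sigma\!\left( c\,b_r^{j} + \sum_{i=1}^{|r|}\sum_{k=0}^{w-1} \vc{x_i}[jw+k]\times (c\,\vc{r}[i][jw+k]) \right).
\]
In other words, $f(\score(r(\bar{x})))$ is itself a \ourmodel{} score, computed against the rescaled relation parameters $c\,\vc{r}$ and biases $c\,b_r^{j}$.

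With that reformulation in hand, I would set
\[
\vc{t}[i][k] = \min\bigl(\vc{s}[i][k],\, c\cdot \vc{r}[i][k]\bigr), \qquad
b_t^{j} = \min\bigl(b_s^{j},\, c\cdot b_r^{j}\bigr),
\]
for all valid $i,k,j$, and then verify the desired inequality coordinate by coordinate. Because entity embeddings are non-negative ($\vc{x_i}\in[0,1]^d$), scalar multiplication commutes with $\min$: $a\cdot \min(u,v) = \min(au,av)$ when $a\geq 0$. This lets me pull $\vc{x_i}[jw+k]$ inside the $\min$ in the inner product. Then I apply the standard inequality $\sum_i \min(u_i,v_i) \leq \min(\sum_i u_i, \sum_i v_i)$ to move the $\min$ outside the inner double sum, and combine with $b_t^j \leq \min(b_s^j, c\, b_r^j)$ to bound the whole pre-activation inside window $j$ by $\min$ of the $s$- and $(cr)$-pre-activations.

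The proof is then finished by two monotonicity/convexity facts used in sequence. First, since $\sigma$ is monotonically increasing, $\sigma(\min(\alpha,\beta)) = \min(\sigma(\alpha),\sigma(\beta))$, so the bound lifts through the nonlinearity per window. Second, averaging over windows, the same elementary inequality $\tfrac{1}{n_w}\sum_j \min(a_j,b_j) \leq \min\!\bigl(\tfrac{1}{n_w}\sum_j a_j,\, \tfrac{1}{n_w}\sum_j b_j\bigr)$ delivers
\[
\score(t(\bar{x})) \leq \min\!\Bigl(\score(s(\bar{x})),\, \tfrac{1}{n_w}\sum_{j}\sigma(c\,b_r^j + c\cdot\text{inner}_j)\Bigr),
\]
and the second argument of the $\min$ is exactly $f(\score(r(\bar{x})))$ by the rewriting in the first paragraph.

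The main obstacle is simply the bookkeeping around the sign of~$c$: since $c$ may be negative (e.g.\ $c=-1$ when $\sigma$ is sigmoid or tanh), the rescaled entries $c\cdot\vc{r}[i][k]$ can take either sign, and we must be careful that the step $a\cdot\min(u,v)=\min(au,av)$ is applied only to the factor $\vc{x_i}[jw+k]\geq 0$ and not to $c$ itself; the constant $c$ should instead be treated as part of the rescaled relation parameters, not pulled through $\min$. Once that is handled cleanly, the remaining steps are monotonicity arguments that closely parallel the \op{Set Union} proof.
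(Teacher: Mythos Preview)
Your proposal is correct and takes essentially the same approach as the paper: the same rewriting of $f(\score(r(\bar{x})))$ via linearity of $f$ and the identity $f(\sigma(x))=\sigma(cx)$, and the identical parametrization $\vc{t}[i][k]=\min(\vc{s}[i][k],\,c\cdot\vc{r}[i][k])$, $b_t^j=\min(b_s^j,\,c\cdot b_r^j)$. In fact you supply more of the monotonicity bookkeeping (non-negativity of entity embeddings, $\sum\min\leq\min\sum$, and $\sigma(\min)=\min\sigma$) than the paper, which simply asserts that the inequality ``can be observed'' once those assignments are made.
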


\begin{proof}

As $f$ is the relation complement, we have:
\begin{dmath*}
\score(\neg r(x_1, \dots, x_n)) = f(\score(r(x_1, \dots, x_n)))
\end{dmath*}
As $f$ is linear, we can distribute it inside the summation as follows:
\begin{dmath*}
f(\score(r(x_1, \dots, x_n))) = 
\frac{1}{n_w}\sum_{j=0}^{n_w-1} f(\sigma(b_r^{j} + \sum_{i=1}^{|r|} \sum_{k=0}^{w-1} \vc{x_i}[jw + k] \times \vc{r}[i][jw + k]))
\end{dmath*}

Now, as $f(\sigma (x)) = \sigma(c \times x)$, we can distribute it inside the $\sigma$ as follows:

\begin{dmath*}
f(\score(r(x_1, \dots, x_n))) = 
\frac{1}{n_w}\sum_{j=0}^{n_w-1} \sigma \left(b_r^{j} \times c + \sum_{i=1}^{|r|} \sum_{k=0}^{w-1} \vc{x_i}[jw + k] \times \vc{r}[i][jw + k]\right \times c)
\end{dmath*}

Therefore, for the above inequality to hold, the bias terms and embedding values of $t$ must be at most that of each of~$s$ and the complement of the score of~$r$.
Examining the scoring functions of each of the relations $t$, $r$, $s$, it can be observed that the lemma holds when the following is set. 
\begin{enumerate}
\item $t[i][k] = \min(s[i][k], r[i][k] \times c)$  ~~~ $\forall ~~ 1\leq i < |\bar{x}|$ and $0 \leq k < d$.
\item $b^j_t = \min(b^j_s, b^j_r \times c)$ ~~~ $\forall ~~ 0 \leq j < n_w$.
\end{enumerate}
\end{proof}

\subsection{Representing Relational Algebra with HypE}\label{appendix:relational-algebra-hype}
So far, we showed that \ourmodel{} is fully expressive and can represent
the relational algebra operations \op{renaming}, \op{projection}, 
\op{selection}, \op{set union}, and \op{set difference}.
We also showed that most other models for knowledge hypergraph completion
are not fully expressive.
In this section, we show that even as HypE is fully expressive in general,
it cannot represent some relational algebra operations (namely, \op{selection}) while at the same time
retaining its full expressivity. In special cases, such as when all tuples are false, it obviously can, however, we show that in the general case it cannot. 

We proceed by first showing in Theorem~\ref{theorem:Hype selection1} that HypE cannot represent \op{selection} for \emph{arbitrary} entity and relation embeddings while retaining full expressivity. Now, one might think that even if representing \op{selection} is not possible for all embeddings, there might be some embedding space for which HypE would be able to represent \op{selection}. In Theorem~\ref{theorem:Hype selection2} we show that when the embedding size is less than the number of entities (as it usually is the case), then there is no setting for which HypE would be able to represent \op{selection}.

Recall that HypE embeds an entity $x_i$ and a relation $r$ in vectors $\vc{x_i} \in \realset^{d}$ and $\vc{r} \in \realset^d$ respectively, where
$d$ be the embedding size. 
In what follows, we let $f(x, p)$ be a function that computes the convolution of the embedding of entity~$x$ with the corresponding convolution filters associated to position $p$ and outputs a vector (see~\cite{HypE} for more details).
We let $\scoreH{}$ to be the HypE scoring function.

\begin{theorem}[HypE \op{Selection} 1 (a)]
\label{theorem:Hype selection1}
For arbitrary relation $s$
and arbitrary embeddings for entities $x_1,\dots,x_{n-1}$,
there exists \textbf{no} parametrization for relation $t$ in HypE such that
\begin{equation}\label{eq:selection-HypE}
    \scoreH(t(x_1,\dots,x_{n-1})) = \scoreH(s(x_1,\dots,x_{n-1}, x_{n-1}))\\
\end{equation}
\end{theorem}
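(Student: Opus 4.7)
The plan is to exploit a structural mismatch in how HypE's scoring function depends on the entity embeddings. Recall that HypE writes
\[
\scoreH{}(r(x_1,\dots,x_n)) \;=\; \sum_{k=0}^{d-1} \vc{r}[k]\, \prod_{i=1}^{n} f(x_i, i)[k],
\]
where $f(x,i)$ is the convolution of $\vc{x}$ with the position-$i$ filters and is therefore a linear map of $\vc{x}$. Viewed as a function of the embedding $\vc{x_{n-1}}$ (holding everything else fixed), the left-hand side of equation~\eqref{eq:selection-HypE} is \emph{linear} in $\vc{x_{n-1}}$, because $x_{n-1}$ appears in exactly one factor $f(x_{n-1}, n-1)$. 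On the right-hand side, however, $x_{n-1}$ appears in \emph{two} factors, $f(x_{n-1}, n-1)$ and $f(x_{n-1}, n)$, so $\scoreH{}(s(x_1,\dots,x_{n-1},x_{n-1}))$ is a genuinely \emph{quadratic} form in $\vc{x_{n-1}}$.

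With this observation in hand, the proof strategy is to fix a single witness $s$ together with embeddings for $x_1,\dots,x_{n-2}$ that make the quadratic term on the right-hand side non-vanishing, and then argue that no linear function of $\vc{x_{n-1}}$ can match it on all of $\realset^d$. Concretely, I would first expand both sides using the formula above. Collecting the dependence on $\vc{x_{n-1}}$, the right-hand side becomes $\vc{x_{n-1}}^\top M(s,x_1,\dots,x_{n-2})\, \vc{x_{n-1}}$ for an explicit matrix $M$ built from the filters at positions $n-1$ and $n$, the embedding $\vc{s}$, and the pointwise product $\prod_{i=1}^{n-2} f(x_i,i)[k]$. The left-hand side, by contrast, is an inner product $\langle \vc{t},\, g(\vc{x_{n-1}})\rangle$ with $g$ linear in $\vc{x_{n-1}}$, hence affine (in fact linear) in $\vc{x_{n-1}}$.

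Next I would choose the witness so that $M$ is non-zero. A natural choice is to take $\vc{s}$ to be a vector with a single nonzero coordinate $k^\star$, and to pick $\vc{x_1},\dots,\vc{x_{n-2}}$ so that the product $\prod_{i=1}^{n-2} f(x_i,i)[k^\star]$ is nonzero; then the coefficient of the quadratic form reduces to the rank-one symmetric matrix formed from the position-$(n-1)$ and position-$n$ filter rows at coordinate $k^\star$. Since HypE's filters at different positions are genuinely distinct (otherwise HypE would already collapse across positions and fail to be fully expressive), this matrix is nonzero. Therefore $\vc{x_{n-1}} \mapsto \vc{x_{n-1}}^\top M \vc{x_{n-1}}$ is a nontrivial quadratic form, which cannot equal the affine function on the left for every $\vc{x_{n-1}} \in \realset^d$: substituting $\vc{x_{n-1}}$ and $-\vc{x_{n-1}}$ gives the same value on the right but different values on the left unless the linear part itself vanishes, leading to a contradiction.

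The main obstacle I expect is the non-vanishing step: one must rule out degenerate settings in which the two position filters conspire so that the quadratic coefficient $M$ is identically zero. I would address this by noting that such a coincidence would force the filters at positions $n-1$ and $n$ to be related by a rank-one/antisymmetric structure that contradicts HypE's design (its filters are free learnable parameters), and, if necessary, by choosing a slightly perturbed witness embedding for some $x_i$ to destroy the degeneracy. With that technicality handled, the linear-versus-quadratic argument delivers the theorem.
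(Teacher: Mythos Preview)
Your approach is correct and takes a genuinely different route from the paper's. The paper expands both scores and argues coordinate by coordinate: for the equality to hold at each index $i$, either the product $f(x_1,1)[i]\cdots f(x_{n-1},n-1)[i]$ vanishes or one must set $\vc{t}[i]=\vc{s}[i]\cdot f(x_{n-1},n)[i]$; since this pins $\vc{t}[i]$ to a value depending on the particular entity $x_{n-1}$, replacing $x_{n-1}$ by any $x_k$ with $f(x_k,n)\neq f(x_{n-1},n)$ breaks the identity. Your argument instead exploits the degree mismatch directly: as a function of $\vc{x_{n-1}}$ the left side is homogeneous of degree one (since $f(\cdot,n-1)$ is linear) while the right side is homogeneous of degree two, and the parity trick $\vc{x_{n-1}}\mapsto -\vc{x_{n-1}}$ forces both to vanish identically unless the quadratic part is trivial---which you then rule out with a witness. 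Your route is cleaner and more conceptual; it also sidesteps the somewhat informal step in the paper where equality of the two sums is treated as if it implied equality term by term. The paper's route, on the other hand, makes the failure mechanism very explicit (the would-be embedding of $t$ depends on the selected entity), which connects more transparently to the follow-up Theorem on the $d<|\entset|$ case. Both proofs ultimately rely on the same genericity assumption about the position filters being nondegenerate, and both handle it the same way---by observing that the filters are free learnable parameters, so a nondegenerate witness exists.
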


\begin{proof}

To see why there is no parametrization for $t$ that satisfies 
Equation~\eqref{eq:selection-HypE}, we first expand the left and right hand side of the equality as follows. 

\begin{equation*}
    \scoreH(t(x_1,\dots,x_{n-1})) = \scoreH(s(x_1,\dots,x_{n-1}, x_{n-1}))\\
\end{equation*}
\begin{equation*}
    \begin{split}
    & \sum_{i=1}^{d} \vc{t}[i] \times f(x_1, 1)[i] \times \dots \times f(x_{n-1}, n-1)[i] = \\ 
    & \sum_{i=1}^{d} \vc{s}[i] \times f(x_1, 1)[i] \times \dots \times f(x_{n-1}, n-1)[i] \times f(x_{n-1}, n)[i]
    \end{split}
\end{equation*}

For this equation to hold, the following should hold for arbitrary entities $x_1,\dots,x_{n-1}$, and $\forall ~~ 1 \leq i \leq d$.

\begin{equation}\label{eq:t(x)}
or 
\begin{cases}
  f(x_1, 1)[i] \times \dots \times f(x_{n-1}, n-1)[i] = 0\\
  \vc{t}[i] = \vc{s}[i] \times f(x_{n-1}, n)[i]
\end{cases}
 \end{equation}

For an entity $x$ at position $p$ in a tuple, the output of function $f(x, p)$ depends on the embedding of entity $x$ and the convolution filters associated with position $p$. 
Note that these convolution filters are shared among all relations in the knowledge hypergraph and are not specific to relation $s$ or $t$. 

As we want Equation~\eqref{eq:t(x)} to hold for arbitrary entity embeddings, it is easy to see that there exists at least one setup for convolution filters and entity embeddings for which none of the factors in the product $f(x_i,1)[i] \times \dots \times f(x_{n-1}, n-1)[i]$ is zero.

Now consider such an embedding setting. 
The only way Equation~\eqref{eq:t(x)} is satisfied is when we set \mbox{$\vc{t}[i] = \vc{s}[i] \times f(x_{n-1}, n)[i]$} for all $1 \leq i \leq d$. 
Observe that by setting the embedding of relation $t$ to the embedding of $s$ times $f(x_{n-1}, n)$ for a particular entity $x_{n-1}$, we have effectively set it to a fixed value. 
Now consider an entity $x_k$
such that $f(x_k,n)\neq f(x_{n-1},n)$,
and apply the selection $t$ to $x_k$ by replacing $x_{n - 1}$ with $x_k$ in Equation~\eqref{eq:selection-HypE}. 
The equality condition in the equation will not hold, because none of the conditions in Equation~\eqref{eq:t(x)} hold. 
Therefore, in this setup, $t$ fails to represent $s$ for arbitrary entity embeddings.

Given that relation $t$ should represent \op{selection} of $s$
for all the entities in the hypergraph, setting it to a fixed value
will make it incapable of representing \op{selection} for arbitrary entities.
%
We can thus conclude that there is no parametrization for HypE such that it can represent \op{selection} for arbitrary embeddings of relations and entities.
\end{proof}

We now show that when the embedding size is less than the number of entities $|\entset|$ in the hypergraph, there is no setting for which HypE would be able to represent \op{selection} without losing full expressivity.

\begin{theorem}[HypE \op{Selection 1 (b)}]
\label{theorem:Hype selection2}
For arbitrary relation~$s$, there exists \textbf{no} parametrization for relation $t$ in HypE 
having scoring function $\scoreH$
such that for arbitrary entities $x_1,\dots,x_{n}$
and embedding dimension $d < |\entset|$, where $|\entset|$ is the number of entities in the knowledge hypergraph,
HypE remains fully expressive and 
\begin{equation*}
    \scoreH(t(x_1,\dots,x_{n-1})) = \scoreH(s(x_1,\dots,x_{n-1}, x_{n-1}))\\
\end{equation*}
\end{theorem}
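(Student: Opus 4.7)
The plan is to argue by contradiction, leveraging the pointwise constraints derived in the proof of Theorem~\ref{theorem:Hype selection1} and then invoking the dimension hypothesis $d < |\entset|$. First I would assume that such a parametrization for $t$ exists. From Theorem~\ref{theorem:Hype selection1}, for the selection equation to hold for arbitrary $x_1,\dots,x_{n-1}$, every index $i \in \{1,\dots,d\}$ must satisfy, for every choice of entities, either $\prod_{p=1}^{n-1} f(x_p, p)[i] = 0$, or $\vc{t}[i] = \vc{s}[i] \cdot f(x_{n-1}, n)[i]$. By fixing $x_1,\dots,x_{n-2}$ so that $\prod_{p=1}^{n-2} f(x_p, p)[i] \neq 0$ (arrangeable at any coordinate where $s$ carries signal) and letting $x_{n-1}$ range over all of $\entset$, I would conclude that for each \emph{alive} coordinate $i$ (with $\vc{s}[i] \neq 0$), the value $f(x, n)[i]$ must be constant in $x$, equal to $\vc{t}[i]/\vc{s}[i]$; and for each \emph{dead} coordinate $i$ (with $\vc{s}[i] = 0$), one obtains $\vc{t}[i] = 0$.

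Next I would observe that the convolution filters at position $n$ are shared across all relations in HypE, so the constraint above forces a structural property on the map $x \mapsto f(x,n)$ that affects every relation, not merely $s$ and $t$. In particular, for any other relation $u$ of arity $n$ and any fixed $y_1,\dots,y_{n-1}$, the score $\scoreH(u(y_1,\dots,y_{n-1},x))$ depends on $x$ only through coordinates where $\vc{s}[i] = 0$; on alive coordinates $f(x,n)[i]$ is constant in $x$ and contributes independently of which entity sits in position $n$. The functional dependence of $\scoreH(u(\cdot,x))$ on $x$ therefore factors through the $k$-dimensional projection onto dead coordinates, where $k := |\{i : \vc{s}[i] = 0\}| \leq d$.

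Finally I would invoke the hypothesis $d < |\entset|$ to close the argument. Since $s$ is arbitrary, the cleanest case picks $\vc{s}$ with all coordinates nonzero ($k = 0$): then $\scoreH(u(y_1,\dots,y_{n-1},x))$ is completely independent of $x$, so HypE cannot distinguish the entity at position $n$ in any tuple, contradicting full expressivity. For a general $\vc{s}$ with $k \geq 1$ dead coordinates, the induced map $x \mapsto \scoreH(u(y_1,\dots,y_{n-1},x))$ is a fixed linear form in the $k$-dimensional dead-coordinate projection of entities; hence the set of truth-value patterns realizable over the $|\entset|$ possible occupants of position $n$ has rank at most $k \leq d < |\entset|$, which falls short of the independent degrees of freedom required by full expressivity. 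The hard part will be this final rank/pigeonhole step: one must carefully separate the contribution of the (free) embedding of $u$ from the (now frozen) geometry of entities at position $n$ in order to exhibit a concrete truth-value assignment that no parametrization can realize, and verify that the argument degrades gracefully when $\vc{s}$ is chosen with intermediate sparsity patterns.
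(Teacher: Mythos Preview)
Your route diverges from the paper's. The paper specializes to $n=2$ and works with the \emph{first} branch of the per-coordinate disjunction (Equation~\eqref{eq:t(x1)}): via a pigeonhole on the indices $1,\dots,d$ it exhibits an entity $x_k$ with $f(x_k,1)[i]=0$ for every $i$, so that any tuple with $x_k$ in position $1$ scores zero and full expressivity fails. You instead attack the \emph{second} branch, aiming to show that $f(x,n)[i]$ is constant in $x$ at every alive coordinate $i$, and then finish with a rank bound on the dead coordinates.

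The gap is in the constancy step, not in the final rank step you flagged as hard. Even granting that $x_1,\dots,x_{n-2}$ can be chosen so that $\prod_{p\le n-2} f(x_p,p)[i]\neq 0$ (for $n\ge 3$ this itself needs an argument from full expressivity that you have not supplied), varying $x_{n-1}$ yields at each alive $i$: either $f(x_{n-1},n-1)[i]=0$ or $\vc{t}[i]=\vc{s}[i]\cdot f(x_{n-1},n)[i]$. You therefore pin $f(x,n)[i]$ to the constant $c_i:=\vc{t}[i]/\vc{s}[i]$ only for those $x$ with $f(x,n-1)[i]\neq 0$; for the remaining entities $f(x,n)[i]$ is unconstrained by the selection identity. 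A two-dimensional instance already breaks your conclusion: with entities $\{a,b,c\}$, set $f(a,1)=(1,0)$, $f(b,1)=(0,1)$, $f(c,1)=(1,1)$ and $f(a,2)=(c_1,5)$, $f(b,2)=(7,c_2)$, $f(c,2)=(c_1,c_2)$. The disjunction is satisfied for every pair $(x,i)$, yet $f(\cdot,2)$ is constant at neither index, so your claim that ``position $n$ depends only on dead coordinates'' is false here. Full expressivity does still fail in this configuration --- e.g.\ $\scoreH(u(a,a))=\scoreH(u(a,c))$ for every binary $u$ --- but the contradiction emerges from the coupling between the zero pattern of $f(\cdot,n-1)$ and the constrained entries of $f(\cdot,n)$, which your argument never tracks. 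The paper sidesteps this entanglement by arguing about position $n-1$ directly rather than position $n$.
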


\begin{proof}

Consider $|s| = 2$ and $|t| = 1$ and
tuples $t(x_j)$ and $s(x_j,x_j)$ such that relation $t$ is a \op{selection} of $s$.
We will show that when the embedding dimension $d$ of the entities is smaller than the number of entities in the knowledge hypergraph, there is no parametrization of HypE that gets $t$ to represent $s$ while retaining full expressivity.

For the sake of contradiction, assume that the lemma statement is false. Then there exists a parametrization for $t$ such that HypE is fully expressive and that
\begin{equation*}
    \scoreH(t(x_j)) = \scoreH(s(x_j, x_j))
\end{equation*}

Expanding the score function for $s$ and $t$ we get 
\begin{gather*}
    \sum_{i=1}^{d} \vc{t}[i] \times f(x_j, 1)[i] = \sum_{i=1}^{d} \vc{s}[i] \times f(x_j, 1)[i] \times f(x_{j}, 2)[i]
\end{gather*}
\begin{gather*}
    \Rightarrow
    \sum_{i=1}^{d} f(x_j, 1)[i] \times \left(\vc{t}[i] - \vc{s}[i] \times f(x_j, 2)[i]\right) = 0
\end{gather*}

For this equation to hold for arbitrary entities $x_1, x_2, \dots, x_n$,
it should hold for all $1 \leq i \leq d$ and all $1 \leq j \leq |\entset|$
%
%
for which $\vc{s}(x_j, x_j)$ is true. 
Assuming that $d < |\entset|$, we need to have the following hold for all $1 \leq j \leq |\entset|$ and $1 \leq i \leq d$.

\begin{equation}\label{eq:t(x1)}
or 
\begin{cases}
  f(x_j, 1)[i] = 0 \\
  \vc{t}[i] = \vc{s}[i] \times f(x_j, 2)[i] \\
\end{cases}
 \end{equation}

We claim that to satisfy the above equation simultaneously for all possible $i$ and $j$, 
there must be at least one entity $x_k$ for which the convolution function returns a zero vector; that is, $f(x_k,1)[i] = 0$ for all $0 \leq i \leq d$.
To see why this is true, assume the contrary; that is, for 
%
%
%
each convolution filter $f(x_j,1)$ has at least one bit that is different than zero for arbitrary entity $x_j$. 
Without loss of generality, let the $j$th bit $f(x_j,1)[j] = K_j$, 
for all $1 \leq j \leq |\entset|$ and $K_j \in \mathcal{R^*}$. 
Then, to satisfy Equation~\ref{eq:t(x1)} at index $j$, we must set $\vc{t}[j] = \vc{s}[j] \times f(x_j, 2)[j]$.
As Equation~\ref{eq:t(x1)} must be satisfied for all entities, then
all other entities $x_k$ with $k \neq j$ must have their $j$-th bit set to zero.
Thus $f(x_k, 1)[j] = 0$ for all $0 \leq j \leq d$ and $j \neq k$. See Figure~\ref{fig:selection}.
Since we have $d < |\entset|$,
and by the pigeon-hole principle, there must be at least one entity $x_{d+1}$
such that $f(x_{d+1}, 1)=0$ for all $1 \leq i \leq d$. 
This contradicts the original assumption, and thus 
there exists at least one $k$ for which $f(x_k, 1)$ returns a zero vector. 

This would further imply that any tuple having $x_{k}$ in the first position will 
have a score $\scoreH{}$ of zero.
This would be regardless of the relation in the tuple, or whether or not it is true. 
This violates the full-expressivity of HypE, thus contradicting the original assumption that the lemma statement is False.
Therefore, when $d < |\entset|$, there is no parametrization for $t$ such that HypE represents \op{selection} while retaining full expressivity.
\end{proof}

\begin{figure}[t]
\label{fig:selection}
\begin{center}
    \includegraphics[width=0.95\columnwidth, trim={0 0.0cm 0 0.0cm},clip]{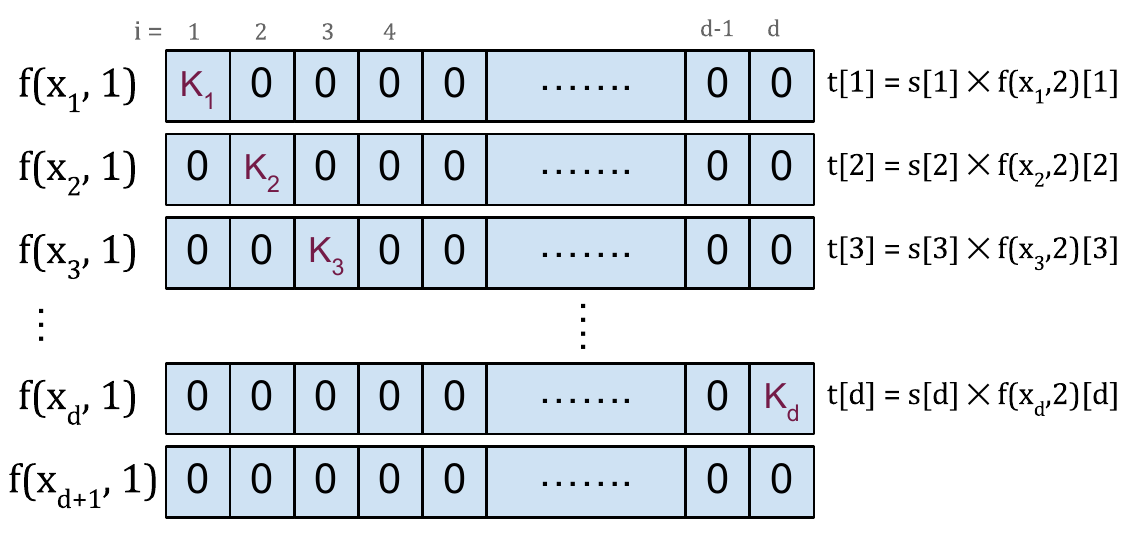}
  \end{center}
  \caption{Proof of Theorem~\ref{theorem:Hype selection2}: If at least one value of each vector $f(x_i,1)$ is different than zero ($K_i$ > 0) and the value of at most one position $i$ can be non-zero (the $d$ columns in the figure), then by the pigeon-hole principle, we will run out of indices as we have more entities in the knowledge hypergraph. In the above image, the entity $x_{d+1}.$ }
  \label{chart:ws}
\end{figure}

\section{Datasets}\label{appendix:datasets}

\subsection{Public Dataset Statistics}
Table~\ref{table:statistics} summarizes the statistics for the public knowledge hypergraph datasets \acr{JF17K}, \acr{FB-auto}, and \acr{m-FB15K}.

\begin{table*}[t]
\footnotesize
\setlength{\tabcolsep}{2pt}
\caption{Dataset Statistics.}
\label{table:statistics}
\begin{center}
\setlength{\tabcolsep}{3pt}
\begin{tabular}{l|ccccc|ccccccc}
\multicolumn{3}{c}{}
& \multicolumn{3}{c}{number of tuples}
& \multicolumn{5}{c}{number of tuples with respective arity}\\
\cmidrule(lr){4-6} \cmidrule(lr){7-11}
Dataset    & $|\mathcal{E}|$  & $|\mathcal{R}|$  & \#train  & \#valid & \#test & \#arity=2 & \#arity=3& \#arity=4& \#arity=5& \#arity=6 \\ \hline
\acr{JF17K} & 29,177 & 327 & 77,733 & -- & 24,915 & 56,322 & 34,550 & 9,509 & 2,230 & 37\\
\acr{FB-auto} & 3,410 & 8 & 6,778 & 2,255 & 2,180 &  3,786 & 0 & 215 & 7,212 & 0\\
\acr{m-FB15K} & 10,314 & 71 & 415,375 & 39,348 & 38,797 & 82,247 & 400,027 & 26 & 11,220& 0\\
\end{tabular}
\end{center}

\end{table*}

\subsection{Synthetic Dataset}
To study and evaluate the generalization power of hypergraph completion models in a controlled environment, we generate a synthetic dataset called \synthdata{} (RELational Erd\H{o}s-R\'enyi). To create \synthdata, we build on the Erd\H{o}s-R\'enyi model~\cite{erdos1959} for generating random graphs and extend it to knowledge hypergraphs. In what follows, we discuss the details of our algorithm.

\subsubsection{Knowledge Hypergraphs}
A \emph{knowledge hypergraph} is a directed hypergraph $H = (V, E, R)$ with nodes (entities)~$V$, edges (tuples)~$E$, and
edge labels (relations)~$R$ such that:

\begin{itemize}
    \item every edge in the hypergraph consists of an \emph{ordered} sequence of nodes, 
    \item every edge has a label $r_i \in R$, and
    \item edges with the same label are defined on the same number of nodes. 
\end{itemize}

Observe that in knowledge hypergraphs, edges having the same label form a uniform directed hypergraph (all edges defined on the same number of nodes). We can thus think of $H$ as the combination of $|R|$ directed uniform hypergraphs.

\subsubsection{Extending Erd\H{o}s-R\'enyi to Knowledge Hypergraphs}
In the Erd\H{o}s-R\'enyi model, all graphs with a fixed number of nodes and edges are equally likely. 
Equivalently, in such a random graph, each edge is present in the graph with a fixed probability $p$, independent of other edges. 
In this section, we describe a method of generating a random \emph{knowledge hypergraph} inspired by the 
Erd\H{o}s-R\'enyi process.

Let $n$ be the (predefined) number of nodes in the hypergraph and $n_r$ be the number of relations. 
We let $R$ be a list of relations defined in terms of arity and a probability that influences the number
of tuples generated for that given relation. More formally,
$$\displaystyle R = \{(k_i, p_i) | k_i = arity, 0 \leq p_i \leq 1, \forall i = 0,\dots,n_r\}$$

The expected number of edges generated for a given relation $r_i$ is $m_i = k_i!{n \choose k_i} p_i$.
As the process of including edges in the graph is a Binomial, we can compute this expected
value by sampling from the following probability density.
$$\displaystyle P(m_i) = {N \choose {m_i}}p_i (1-p_i)^{N-m_i}  $$
where $N = k_i! {n \choose k_i}$ is the number of possible $k_i$-uniform (directed) edges in the hypergraph.

The running time of Algorithm~\ref{erdos-renyi} depends on the number of nodes $n$ and the arity $k_i$ of each relation.
Let $\displaystyle k = max_{k_i \in R} k_i$.  Thus the running time of Algorithm~\ref{erdos-renyi} is $O(|R|n^k)$.
\vspace{10.5pt}

\begin{algorithm}[H]
\caption{generate\_knowledge\_hypergraph($V$, $R$))}
\begin{algorithmic}
  \label{erdos-renyi}
  \STATE edge\_list = []
  \STATE $n$ = len($V$)
  \FOR{$r$, ($k$, $p$) in \emph{enumerate}(R)}
    \STATE $N = k!{n \choose k}$
    \STATE $m$ = random.binomial($N$, $p$)
    \COMMENT{result of flipping a coin $N$ times with probability of success $p$}
    \STATE edge\_count = 0
    \WHILE{edge\_count $<= m$}
      \STATE edge = random.sample($V$, $k$)
      \COMMENT{select $k$ vertices from $V$ at random}
      \IF {edge not in edge\_list}
        \STATE edge\_list.append([$r$] + edge)
        \STATE edge\_count = edge\_count  + 1
      \ENDIF
  \ENDWHILE
\ENDFOR
\STATE return edge\_list
\end{algorithmic}
\end{algorithm}

\subsubsection{Dataset Generation}
To evaluate a model on how well it represents relational algebra operations, we generate 
a set of ground-truth true tuples, each of which is the result of repeated application of a primary relational algebra
operation to an existing tuple (hyperedge). The operations we are interested in are \op{renaming}, \op{projection}, \op{selection}, \op{set union} and \op{set difference}.

\vspace{10.5pt}
\begin{algorithm}
\label{alg:ground truth}
\begin{algorithmic}
\caption{\mbox{generate\_ground\_truth($V$, $R$, $n\_derived\_tuples$)}}
\STATE $E$ = generate_knowledge_hypergraph($V$, $R$)
\FOR{$i$ in $range(n\_derived\_tuples)$}
\STATE $op$ = randomly select one primary operation
\STATE $tuple$ = randomly select one hyperedge from $E$
\STATE apply $op$ to $tuple$
\STATE add $tuple$ to the set of edges $E$
\ENDFOR
\end{algorithmic}
\end{algorithm}

\vspace{10.5pt}

Finally, the complete algorithm to generate the train, valid, and test sets of the synthetic dataset is described in Algorithm~\ref{alg:data-generator} below.

\vspace{10.5pt}
\begin{algorithm}
\caption{synthesize\_dataset($V$, $R$, $n\_derived\_tuples$)}
\begin{algorithmic}
\label{alg:data-generator}
\STATE \mbox{$ground\_truth$ = generate\_ground\_truth($V$, $R$,} \indent 

~~~~~~~~~~~~~~~~~~~~~~~~~\mbox{$n\_derived\_tuples$)}
\STATE $relational\_data$ = sub-sample from $ground\_truth$
\STATE train, valid, test = randomly split  $relational\_data$ into train, valid and test
\end{algorithmic}
\end{algorithm}

\section{Implementation Details}\label{appndix:implementation}
We implement \ourmodel{} in PyTorch~\citep{pytorch} and use Adagrad~\citep{adagrad} as the optimizer and dropout~\citep{srivastava2014dropout} to regularize the model. 
We perform early stopping and hyperparameter tuning based on the MRR on the validation set. We fix the maximum number of epochs to $1000$ and batch size to $128$. We set the embedding size and negative ratio to $200$ and $10$ respectively. 
We tune $lr$ (learning rate) and $w$ (window size) using the sets $\{0.05, 0.08, 0.1, 0.2\}$, and $\{1, 2, 4, 5, 8\}$ (first five divisors of 200).
We tune $\sigma$ (nonlinear function) using the set $\{tanh, sigmoid, exponent\}$ for the \acr{JF17K} dataset. The results of the different nonlinear functions are in Table~\ref{table:sigma results}. As $sigmoid$ outperforms the $tanh$ and $exponent$, we only tried $sigmoid$ for other datasets.

Reported results for the baselines on \acr{JF17K}, \acr{FB-AUTO}, and \acr{M-FB15K} are taken from the original paper except for that of GETD~\cite{GETD}. The original paper of GETD only reports results for arity 3 and 4 as trained and tested separately in the corresponding arity. 
However, in our experimental setup, we train and test in a dataset containing relations of different arities. For that, we train and test GETD. As GETD learns a tensor of dimension $|r|$ for each relation $r$, it needs $d^{|r|}$ (with $d$ as embedding size) number of parameters. The original paper proposes smart strategies to reduce the number of parameters to be learned by the model. However, we still need to store the relation embedding and thus need to store $d^{|r|}$ floating-point numbers for each relation $r$. Because of our memory limitation ($16$GB GPU), we could only train the GETD model for embedding size of less than $10$.

The sign ``-'' in Table 1 indicates that the corresponding paper has not provided the results.

For the experiment on the synthetic dataset, we compare our model with m-TransH~\cite{m-TransH} and HypE~\cite{HypE}, which are the only competitive baselines that have provided the code (\url{https://github.com/ElementAI/HypE}, \url{https://github.com/wenjf/multi-relational_learning}).

The code and the data for all the experiments is
available at \url{https://github.com/baharefatemi/ReAlE}.

\begin{table}
\footnotesize
\setlength{\tabcolsep}{2pt}
\caption{Knowledge hypergraph completion results for \ourmodel{{} }on \acr{JF17K} for different $\sigma$ (nonlinear function).
}
\label{table:sigma results}
\begin{center}
\setlength{\tabcolsep}{3pt}
\begin{tabular}{l|cccc}
\multicolumn{1}{c}{}
& \multicolumn{4}{c}{\acr{JF17K}}\\
\cmidrule(lr){2-5} 
Model    & MRR   & Hit@1 & Hit@3 & Hit@10\\\hline
\ourmodel{} ($\sigma$ as exponent)  (Ours) & 0.394 & 0.311 & 0.428 & 0.548\\
\ourmodel{} ($\sigma$ as tanh)  (Ours) & 0.512 & 0.430 & 0.548 & 0.667\\
\ourmodel{} ($\sigma$ as sigmoid)  (Ours) & \textbf{0.530} & \textbf{0.454} & \textbf{0.563} & \textbf{0.677}\\
\end{tabular}
\end{center}
\end{table}

\end{document}